\newtheorem{lemma}{Lemma}
\newcommand{\repo}{\url{https://github.com/wooner49/T-spear-shield}}
\newcommand{\vece}{\boldsymbol{e}}
\newcommand{\vech}{\boldsymbol{h}}
\newcommand{\vecm}{\boldsymbol{m}}
\newcommand{\vecs}{\boldsymbol{s}}
\newcommand{\vecv}{\boldsymbol{v}}
\newcommand{\matA}{\mathbf{A}}
\newcommand{\matE}{\mathbf{E}}
\newcommand{\calE}{\mathcal{E}}
\newcommand{\calG}{\mathcal{G}}
\newcommand{\calL}{\mathcal{L}}
\newcommand{\calN}{\mathcal{N}}
\newcommand{\calV}{\mathcal{V}}
\newcommand{\edgei}{e_{i}=(u_{i},v_{i},t_{i})}
\newcommand{\tb}[1]{\textbf{#1}\xspace}
\newcommand{\ul}[1]{\underline{#1}\xspace}
\newcommand{\TGN}{\textsc{TGN}\xspace}
\newcommand{\JODIE}{\textsc{JODIE}\xspace}
\newcommand{\TGAT}{\textsc{TGAT}\xspace}
\newcommand{\DySAT}{\textsc{DySAT}\xspace}
\newcommand{\random}{\textsc{Random}\xspace}
\newcommand{\pa}{\textsc{PA}\xspace}
\newcommand{\jaccard}{\textsc{Jaccard}\xspace}
\newcommand{\degree}{\textsc{Struct-D}\xspace}
\newcommand{\pagerank}{\textsc{Struct-PR}\xspace}
\newcommand{\attack}{\textsc{T-Spear}\xspace}
\newcommand{\tgnsvd}{\textsc{TGN-SVD}\xspace}
\newcommand{\tgncosine}{\textsc{TGN-Cosine}\xspace}
\newcommand{\defensefilter}{\textsc{T-Shield-F}\xspace}
\newcommand{\defense}{\textsc{T-Shield}\xspace}
\DeclareMathOperator*{\argmin}{arg\,min}
\title{Spear and Shield: Adversarial Attacks and Defense Methods for Model-Based Link Prediction on Continuous-Time Dynamic Graphs}
\author{Dongjin Lee\textsuperscript{\rm 1},
Juho Lee\textsuperscript{\rm 2}, Kijung Shin\textsuperscript{\rm 1,2}}
\begin{document}
\maketitle

\begin{abstract}
Real-world graphs are dynamic, constantly evolving with new interactions, such as financial transactions in financial networks. 
Temporal Graph Neural Networks (TGNNs) have been developed to effectively capture the evolving patterns in dynamic graphs. 
While these models have demonstrated their superiority, being widely adopted in various important fields, their vulnerabilities against adversarial attacks remain largely unexplored.
In this paper, we propose \attack, a simple and effective adversarial attack method for link prediction on continuous-time dynamic graphs, focusing on investigating the vulnerabilities of TGNNs.
Specifically, before the training procedure of a victim model, which is a TGNN for link prediction, we inject edge perturbations to the data that are unnoticeable in terms of the four constraints we propose, and yet effective enough to cause malfunction of the victim model. 
Moreover, we propose a robust training approach \defense to mitigate the impact of adversarial attacks.
By using edge filtering and enforcing temporal smoothness to node embeddings, we enhance the robustness of the victim model.
Our experimental study shows that \attack significantly degrades the victim model's performance on link prediction tasks, and even more, our attacks are transferable to other TGNNs, which differ from the victim model assumed by the attacker.
Moreover, we demonstrate that \defense effectively filters out adversarial edges and exhibits robustness against adversarial attacks, surpassing the link prediction performance of the na\"ive TGNN by up to 11.2\% under \attack.
The code and datasets are available at \repo.

\end{abstract}

\section{Introduction}\label{sec:intro}
Graph Neural Networks (GNNs) have shown impressive capabilities across various tasks~\citep{sun2022adversarial,jin2021adversarial}.
However, their susceptibility to adversarial perturbations~\citep{bojchevski2019adversarial,dai2018adversarial,zugner2018adversarial,xu2019topology} has raised concerns about their safe use in critical applications, including fraud detection~\citep{hooi2016fraudar,zeager2017adversarial} and malware detection~\citep{hou2019alphacyber}. 
Despite substantial progress made in developing attack and defense techniques~\citep{sun2022adversarial,jin2021adversarial}, the focus has primarily been on GNNs designed for static graphs.

In the real world, graphs are dynamic, constantly changing over time. 
For example, in financial networks, transactions between accounts occur continuously over time, leading to ever-changing graphs.
Dynamic graphs fall into two types: Discrete-Time Dynamic Graphs (DTDGs), denoted as a sequence of static graph snapshots, and Continuous-Time Dynamic Graphs (CTDGs), represented as a sequence of interactions over continuous time.
Due to the capability of CTDGs to capture more intricate and fine-grained temporal patterns in dynamic graphs, CTDGs have recently garnered increasing attention, resulting in the development of numerous machine learning models on CTDGs for time-critical applications~\citep{kazemi2020representation,thomas2022graph}.

To effectively capture the evolving patterns in CTDGs, Temporal Graph Neural Networks (TGNNs)~\citep{rossi2020temporal,kumar2019predicting,xu2020inductive,wang2021apan,zhou2022tgl} have been developed.
TGNNs jointly learn the temporal, structural, and contextual relationships present in CTDGs by encoding graph information into time-aware node embeddings.
By incorporating temporal information, TGNNs have demonstrated superior performance over static GNNs in various tasks like link prediction~\citep{cong2023we,wang2021inductive} and dynamic node classification~\citep{rossi2020temporal}.

Nonetheless, in contrast to the actively researched adversarial attacks on static GNNs, the vulnerabilities of TGNNs to adversarial attacks remain underexplored, yet the significance of conducting such research is undeniable.
For instance, in a financial attack~\citep{fursov2021adversarial,zeager2017adversarial} scenario, an adversary may inject adversarial transactions into the transaction graph, perturbing the timing and content of transactions to mislead the model running on the graph. 
This manipulation could lead to the model falsely predicting fraudulent transactions as legitimate, resulting in inaccurate risk predictions and potential financial losses.

In this context, we explore the weakness of TGNNs by introducing \attack, a simple and effective adversarial attack method for link prediction on CTDGs.
From an adversarial perspective, designing effective attacks on CTDGs poses unique challenges compared to attacks on static graphs.
First, in addition to the perturbation injected into graphs, attacks should carefully be timed.
Second, attacks should stay stealthy and not deviate significantly from the usual evolving patterns of interactions in the original graph.
To address these challenges, we impose four constraints on perturbations to ensure the attack remains unnoticeable: in terms of the (C1) perturbation budget, (C2) distribution of time, (C3) endpoints of adversarial edges, and (C4) number of perturbations per node.
While satisfying the above constraints, we generate edge perturbations by selecting edges that are unlikely to be formed when considering the evolution of dynamic graphs, and then inject them into the data before the training process of a victim model (i.e., a poisoning attack).
Benchmarking on link prediction tasks, \attack outperforms baselines, all within the same perturbation budget.
Moreover, \attack remains effective even when the victim model differs from the model assumed by the attacker, indicating that \attack is transferrable. 


In terms of the defense against adversarial attacks, the techniques available for static GNNs~\citep{mujkanovic2022defenses} are not adequate, partly because they do not consider the dynamic nature of the graphs. For instance, in CTDGs, a normal edge may turn into an adversarial edge over time, but a defense algorithm for static GNNs cannot adapt to this change.
To mitigate the impact of adversarial attacks on CTDGs, we propose \defense, a robust training method for TGNNs.
By employing the edge filtering technique that considers graphs' dynamics, \defense eliminates potential adversarial edges from the corrupted dynamic graphs without any prior knowledge of the attacks.
In addition, we enforce temporal smoothness on node embeddings to prevent abrupt changes, further enhancing the models' robustness. 

In summary, our contributions are as follows:
\begin{itemize}[leftmargin=*,noitemsep,topsep=0pt,parsep=0pt,partopsep=0pt]
    \item To the best of our knowledge, we are the first to formulate adversarial attacks on CTDGs under realistic constraints regarding unnoticeability.
    \item We propose \attack, a simple and effective poisoning attack method on CTDGs, and demonstrate its superiority over 5 baselines across 4 different datasets and 4 victim models on link prediction tasks. 
    \item We propose \defense, a robust training method for TGNNs that shows robustness against adversarial attacks, surpassing the na\"ive TGNN by up to 11.2\% on link prediction tasks.
\end{itemize}

\section{Related Work}\label{sec:related}

\subsubsection{Temporal Graph Neural Networks (TGNNs).}
TGNNs are categorized into two types: discrete-time TGNNs and continuous-time TGNNs.
Discrete-time TGNNs operate on sequences of static graph snapshots that are evenly sampled.
Many discrete-time TGNNs integrate spatial and temporal information by combining GNNs with sequence models~\citep{sankar2018dynamic,seo2018structured,pareja2020evolvegcn,kazemi2020representation}.
Especially, DySAT~\citep{sankar2018dynamic} leverages structural and temporal self-attention on DTDGs.

The continuous-time TGNNs to which our work relates directly learn the time-aware node embeddings on CTDGs, allowing them to capture finer temporal patterns compared to discrete-time TGNNs. 
Specifically, JODIE~\citep{kumar2019predicting} employs two Recurrent Neural Network (RNN) modules to sequentially update the source and destination node embeddings as new edges arrive.
TGAT~\citep{xu2020inductive} employs an attention-based message-passing to aggregate messages from a historical neighborhood.
To alleviate the expensive neighborhood aggregation of TGAT, TGN~\citep{rossi2020temporal} employs an RNN memory module to encode the history of each node.
Our proposed method mainly focuses on targeting the above three representative continuous-time TGNNs.

In addition to these, various studies~\citep{jin2022neural,wang2021apan,zhou2022tgl,cong2023we} have been conducted on continuous-time TGNNs.
Most research on continuous-time TGNNs has demonstrated their advancement using link prediction as a downstream task, so we mainly investigate link prediction as our target task to study the effect of adversarial attacks.

\subsubsection{Adversarial Attacks on Graphs.}
Many studies have delved into adversarial attacks on static graphs~\citep{sun2022adversarial,jin2021adversarial}.
Moreover, graph adversarial attacks have been explored from diverse angles, such as perturbation types~\citep{liu2019unified} and unnoticeability~\citep{zugner2018adversarial,dai2023unnoticeable}.

Some studies consider adversarial attacks on DTDGs, focusing on degrading the link prediction performance.
Specifically, TGA~\citep{chen2021time} is a white-box evasion attack that greedily selects edge perturbations across graph snapshots.
TD-PGD~\citep{sharma2023temporal} is a Projected Gradient Descent (PGD) based attack under temporal dynamics-aware perturbation constraint.
However, these methods are only applicable to DTDGs and cannot generate edge perturbations with continuous timestamps.
Despite diverse studies, adversarial attacks on CTDGs are insufficiently studied.

\subsubsection{Adversarial Defenses on Graphs.}
A bunch of work has proposed adversarial defense methods for static graphs~\citep{mujkanovic2022defenses}.
Especially, GCN-SVD~\citep{entezari2020all} substitutes the adjacency matrix with its low-rank approximation before plugging it into a regular GNN. 
ProGNN~\citep{jin2020graph} alternately optimizes the parameters of the GNN and the graph structure.
GNNGuard~\citep{zhang2020gnnguard} filters edges in the message passing stage using cosine-similarity.
However, these methods cannot be directly applied to CTDGs, and repeatedly applying them whenever new interactions occur is computationally prohibitive, and further fail to consider the dynamics in graphs.

In contrast, the literature on adversarial defense for dynamic graphs is limited.
AdaNet~\citep{li2022robust} proposes an adaptive neighbor selection technique for reliable message propagation on CTDGs.
However, the method demonstrates its robustness against small amounts of contextual noise such as outdated links and node feature noise, rather than structural perturbations that are our main focus.


\section{Problem Formulation}\label{sec:problem}

\subsection{Notation and Preliminaries}\label{sec:problem:prelim}

From now on, unless specified otherwise, by a dynamic graph we refer to CTDG.
A dynamic graph denoted as $\calG=(\calV,\calE)$ is a sequence of temporal interactions, where $\calV$ is the set of nodes, $\calE=\{e_{1},e_{2},\cdots,e_{|\calE|}\}$ is the set of interactions.
Each interaction $\edgei$ is a dyadic event between two nodes $u_{i},v_{i}\in\calV$ occurring at a specific time $t_{i}\in\mathbb{R}^{+}$.
For simplicity, we first assume these interactions are undirected, unattributed, and ordered chronologically (i.e., $t_{i}\leq t_{i+1}$).
Later, we will discuss how our proposed method can be extended to attributed dynamic graphs.

TGNNs are commonly trained using temporal interactions as supervision~\citep{thomas2022graph}. 
Therefore, the capability of a model is evaluated by how accurately it may predict future interactions based on historical events.
Specifically, given two nodes $u$ and $v$ at time $t$, TGNNs aim to learn their time-aware embeddings $\vech_{u}(t)$ and $\vech_{v}(t)$, where the presence of an interaction between them can be predicted with a downstream classifier, i.e., $\hat{y}_{uvt}=\textnormal{clf}(\vech_{u}(t),\vech_{v}(t))$. 
The loss function is typically formulated as:
\begin{equation}\label{eq:link_loss}
\setlength{\belowdisplayskip}{3pt} \setlength{\belowdisplayshortskip}{3pt}
\setlength{\abovedisplayskip}{3pt} \setlength{\abovedisplayshortskip}{3pt}
    \calL_{\calE}=-\sum_{(u,v,t)\in\calE}{\log{\hat{y}_{uvt}}+\mathbb{E}_{n\sim P_{n}(\calV)}\log{(1-\hat{y}_{unt})}},
\end{equation}
where $P_{n}(\calV)$ is the negative sampling distribution. 
In this paper, we refer to the classifier $\textnormal{clf}(\cdot)$ as the \textit{edge scorer} and the output $\hat{y}_{uvt}$ as the \textit{edge score} of the edge $(u,v,t)$.

\subsection{Dynamic Graph Adversarial Attack}
\subsubsection{Attacker's Goal.}


Given a dynamic graph, an attacker makes slight yet intentional changes to the data to achieve a purpose, for instance, deteriorating the prediction performance of the models.
In this work, we assume that the goal of the attacker is to degrade the dynamic link prediction performance of TGNN models by injecting temporal adversarial edges into the original edge sequence.
Formally, a corrupted dynamic graph is denoted as $\tilde{\calG}=(\calV,\calE\cup\tilde{\calE})$, where $\tilde{\calE}=\{\tilde{e}_{1},\tilde{e}_{2},\cdots,\tilde{e}_{\Delta}\}$ is the set of adversarial edges and $\Delta$ is the perturbation budget.
Our proposed attack takes place before the victim model is trained, i.e., it is a \mbox{poisoning attack.}

\subsubsection{Constraint for Unnoticeability.}

To ensure the effectiveness and stealthiness of adversarial attacks, we impose four constraints on dynamic graph attacks:



\begin{enumerate}[leftmargin=*,label=\textbf{(C\arabic*)},wide=0pt,noitemsep,topsep=0pt,parsep=0pt,partopsep=0pt]
    \item \textbf{Perturbation budget}: To control the extent of adversarial perturbations, we impose a budget constraint $\Delta$ on the attacks, limiting the number of adversarial edges to be no more than a budget, i.e., $|\tilde{\calE}|\leq\Delta$. 
    We control the budget with the perturbation rate $p$, i.e., $\Delta=\lfloor p\cdot|\calE|\rfloor$. 
    
    \item \textbf{Distribution of time}: To enhance the stealthiness, we impose a distribution constraint on the timing of the adversarial edges. 
    To this end, when adding an adversarial edge $\tilde{e}=(\tilde{u},\tilde{v},\tilde{t})$ at time $\tilde{t}$, we ensure that the time $\tilde{t}$ follows the time distribution observed in the original edges, i.e., $\tilde{t}\sim P_{t}(\calE)$, where $P_{t}$ is the time distribution\footnote{
    We use Kernel Density Estimation (KDE) to approximate the time distribution of the original edges. We employ the Gaussian kernel with a bandwidth of 0.1. The timestamps of the adversarial edges are drawn from the estimated temporal distribution.}.
    This constraint aims to align the occurrences of adversarial edges with the natural flow of interactions in dynamic graphs, making the adversarial edges less distinguishable from genuine edges.
    
    \item \textbf{Endpoints of adversarial edges}: If an adversarial edge is composed of nodes that were activated a significant time ago, it becomes easily detectable because it deviates from the typical pattern of interactions. 
    Thus, we enforce a restriction that adversarial edges can only be formed using nodes from interactions that occurred within a specific time window $W$. 
    Formally, when an adversarial edge $\tilde{e}$ is added at time $\tilde{t}$, where $t_{i}\leq\tilde{t}\leq t_{i+1}$, we select two nodes $\tilde{u}$ and $\tilde{v}$ from the set of nodes comprising the preceding $W$ edges, i.e., $\tilde{u},\tilde{v}\in\calV_{i,W}=\{u_{i-W+1},v_{i-W+1},\cdots,u_{i},v_{i}\}$.
    This constraint ensures that the adversarial edges are derived from recent interactions, making them less conspicuous.

    \item \textbf{Number of perturbations per node}: A sudden increase in the degree of nodes can make the attack easily detectable~\citep{bhatia2020midas}. 
    By limiting the number of perturbations per node to $N$, the attack becomes less noticeable.
    It is important to note that $N$ is not a fixed value for the entire graph; it can vary over time, considering the dynamics of the original interactions in dynamic graphs.
    
\end{enumerate}

\subsubsection{Attacker's Knowledge.}
We focus on limited-knowledge attacks, in which the attacker lacks knowledge about the victim model, including its architecture and trained weights.
However, the attacker does possess the same level of knowledge about the data as the victim model, allowing them to observe all the dynamic graph data $\calG$ and to study the temporal interactions within the dynamic graph.
Note that while our primary focus is on targeting TGN~\citep{rossi2020temporal}, one of the representative TGNNs, the scope is not limited to this model.
We investigate the transferability of the proposed attack, demonstrating its effectiveness for other victim models.
These settings are realistic because the attacker cannot know the details of the victim model in real situations.



\section{Proposed Attack Method: \attack}\label{sec:attack}

In this section, we introduce our proposed adversarial attack method \attack. 
Our attack exploits the edge scores ($\hat{y}_{uvt}$) to create adversarial edges, leveraging the fact that a low edge score indicates the edge is unlikely to be formed, and thus likely to corrupt the natural evolution of the dynamic graph.
The method aims to strategically generate and inject adversarial edges into the original edge sequence while adhering to the four constraints (C1-C4).

\subsection{Surrogate Model}\label{sec:attack:surrogate}

We face the challenge of limited knowledge about the victim model's parameters and architecture. 
To address this issue, we employ a surrogate model, specifically the TGN model, which serves as a substitute for the victim model. 
This enables us to develop and optimize adversarial edges without directly accessing the victim model's internals.
\subsubsection{Node Memory.}
TGN uses node memory to summarize past interactions.
When there is an interaction involving a node, the memory of that node is updated.
For each interaction $(u,v,t)$, a message from $v$ to $u$ is computed as follows:
\begin{equation}
    \vecm_{u}(t)=[\vecs_{u}(t^{-})||\vecs_{v}(t^{-})||\Phi(t-t_{u}^{-})||\vece_{uv}].
\end{equation}
Here, $\vecs_{u}(t^{-})$ is the memory of node $u$ just before time $t$, $||$ is the concatenation operator, and $\Phi(\Delta t)=\cos(\boldsymbol{\omega}\Delta t+\boldsymbol{\phi})$ is the time encoder which encodes the time interval $\Delta t=t-t_{u}^{-}$ into a vector, where $\boldsymbol{\omega}$ and $\boldsymbol{\phi}$ are learnable vectors.
Edge features are denoted as $\vece_{uv}$, but they can be ignored if not provided.
The node memory is then updated as:

\begin{equation}
    \vecs_{u}(t)=\textnormal{mem}(\vecs_{u}(t^{-}), \vecm_{u}(t)),
\end{equation}
where mem($\cdot$) is the GRU~\citep{cho2014learning} memory updater.
The memory of node $v$ is also updated in the same manner.

\subsubsection{Attention Aggregator.}
TGN employs the attention mechanism of Transformer~\citep{vaswani2017attention} to aggregate information from temporal neighbors. 
The attention aggregation of node $u$ is computed by the queries, keys, and values from its temporal neighbors $v\in\calN_{u}(t)$, where $\calN_{u}(T)=\{v:(u,v,t)\in\calE,t\leq T\}$.
The temporal embedding $\vech_{u}(t)$ of node $u$ at time $t$ can be computed by:
\begin{equation}
\setlength{\belowdisplayskip}{3pt} \setlength{\belowdisplayshortskip}{3pt}
\setlength{\abovedisplayskip}{3pt} \setlength{\abovedisplayshortskip}{3pt}
    \vech_{u}(t)=\sum_{v\in\calN_{u}(t)}{\textnormal{attn}(\vecs_{u}(t),\vecs_{v}(t),\vece_{uv},\vecv_{u}(t),\vecv_{v}(t))},
\end{equation}
where attn($\cdot$) is the temporal graph attention module~\citep{xu2020inductive}, and $\vecv_{u}(t)$ is the node features. 
The node features can also be disregarded if they are not provided.
The attention aggregator enables TGN to integrate relevant information from its temporal neighbors, considering their importance in the context of the dynamic graph at time $t$.

\subsubsection{Train the Surrogate Model.}

As discussed in Section~\ref{sec:problem:prelim}, a surrogate TGN is trained in conjunction with a link classifier, which is responsible for predicting the occurrence of future interactions based on the time-aware node embeddings.
After training, the classifier (i.e., edge scorer) estimates the likelihood of edge formation (i.e., edge score).


\subsection{Na\"ive Adversarial Edge Selection}\label{sec:attack:naive}

Given a well-trained surrogate model and its associated edge scorer, to create the adversarial edges $\tilde{\calE}=\{\tilde{e}_{1},\tilde{e}_{2},\cdots,\tilde{e}_{\Delta}\}$, we first determine the timestamps, i.e., $\tilde{t_{1}},\cdots,\tilde{t_{\Delta}}$, at which the adversarial edges are to be injected.
Ensuring stealthiness, we sample $\tilde{t_{1}},\cdots,\tilde{t_{\Delta}}\sim P_{t}(\calE)$ to align them with the time distribution of the original edges (C2).

Next, for each time sampled $\tilde{t}$, we select two nodes $\tilde{u}$ and $\tilde{v}$ based on their edge score.
To satisfy the constraint (C3), we choose two nodes from the set of nodes that comprise the previous $W$ edges.
When $t_{i}\leq\tilde{t}\leq t_{i+1}$ holds, we compute edge scores for all pairs of nodes in $\calV_{i,W}$ using the node embeddings at time $\tilde{t}$.
The node pair with the lowest edge score is chosen as the endpoints of the adversarial edge.
It is formally described as follows:
\begin{equation}
\setlength{\belowdisplayskip}{3pt} \setlength{\belowdisplayshortskip}{3pt}
\setlength{\abovedisplayskip}{3pt} \setlength{\abovedisplayshortskip}{3pt}
    (\tilde{u},\tilde{v})=\argmin_{(u,v)\in\calE(\calV_{i,W})}{\textnormal{clf}(\vech_{u}(\tilde{t}),\vech_{v}(\tilde{t}))}.
\end{equation}
Here, $\calE(\calV_{i,W})=\{(u,v)|u,v\in\calV_{i,W},u\neq v\}$ denotes all pairs of nodes within $\calV_{i,W}$.
The lowest edge score indicates that the corresponding edge is the least likely to be formed at time $\tilde{t}$.
Thus, the proposed adversarial edge selection rule may effectively perturb the temporal dynamics of the graph.
We repeat the edge selection $\Delta$ times for all timestamps $\tilde{t_{1}},\cdots,\tilde{t_{\Delta}}$, thereby satisfying the constraint (C1).

\subsection{Advanced Edge Selection on a Batch}\label{sec:attack:batch}
Selecting adversarial edges one by one is effective but it may become impractical when dealing with large datasets.
To overcome this issue, we propose an alternative where adversarial edges are selected in batches, improving efficiency.
The batch size $|B|$ is set to half of the window size $\lfloor|W|/2\rfloor$ to adhere the constraint (C3), and we generate $K=\lfloor p\cdot|B|\rfloor$ adversarial edges per batch.

For each batch $b$, we first sample $K$ timestamps from the time distribution $P_{t}(\calE)$, i.e., $\tilde{t}_{b,1},\cdots,\tilde{t}_{b,K}\sim P_{t}(\calE)$, where $t_{b,1}\leq\tilde{t}_{b,1}\leq\cdots\leq\tilde{t}_{b,K}\leq t_{b,|B|}$.
Here, $t_{b,1}$ and $t_{b,|B|}$ are the first and the last timestamps of the original edges in the batch $b$, respectively.
Let $\calV_{b-1}$ be a set of nodes constituting the edges in the previous batch $b-1$.
We then compute the edge scores for all node pairs within $\calV_{b-1}$ using the node embeddings at time $\tilde{t}_{b,1}$, which is the first timestamp of the perturbations.
For an edge $(u,v,\tilde{t}_{b,1})$, its edge score is $\hat{y}_{uv\tilde{t}_{b,1}}$.
Based on these edge scores, we select $K$ edges as the adversarial edges. 

\subsubsection{Low-K Selection.}\label{sec:attack:lowk}
The simplest way to select $K$ edges is to pick the $K$ edges with the lowest edge scores.
However, we found that this method often leads to a substantial overlap of nodes.
While it effectively corrupts information related to specific nodes, it may not sufficiently degrade the overall link prediction performance.
Moreover, this method can result in a sudden increase in high-degree nodes, which violates constraint (C4), making the attack more detectable.

\begin{figure}
    \centering
    \includegraphics[width=0.97\columnwidth]{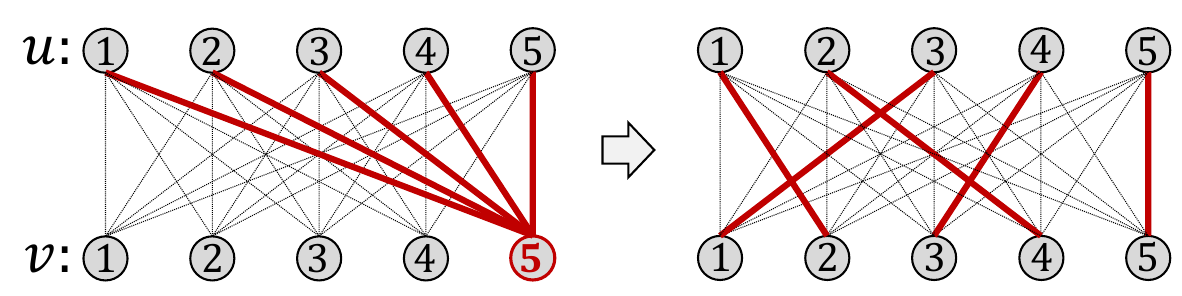}
    \caption{Comparison of Low-K selection (left) and Hungarian selection (right) when $K$ is 5. 
    The Hungarian selection provides more inconspicuous and balanced perturbations. }
    \label{fig:hungarian}
\end{figure}

\subsubsection{Hungarian Selection.}\label{sec:attack:hungarian}
To ensure the chosen edges are both impactful and inconspicuous, it is important to select adversarial edges with low edge scores and minimal overlapping endpoints.
To this end, we reframe our challenge as the problem of assignment between the nodes in $\calV_{b-1}$. 
Formally, let $x_{uv}$ be an assignment indicator where $x_{uv}=1$ if node $u$ is matched with node $v$, otherwise $x_{uv}=0$.
We only allow one-to-one matching between distinct nodes.
The optimal set of edges made within $\calV_{b-1}$, denoted as $E^{*}$, can be obtained by solving our assignment problem formulated as:

\small
\vspace{-3mm}
\begin{align}
\setlength{\belowdisplayskip}{2pt} \setlength{\belowdisplayshortskip}{2pt}
\setlength{\abovedisplayskip}{2pt} \setlength{\abovedisplayshortskip}{2pt}
    E^{*} & =\argmin_{\{(u,v)|x_{uv}=1\}}{\sum_{u\in\calV_{b-1}}\sum_{v\in\calV_{b-1}}{\hat{y}_{uv\tilde{t}_{b,1}}x_{uv}}},\\
    \textnormal{subject to \ \ } & \sum_{\mathclap{v\in\calV_{b-1}\ }}\ {x_{uv}}=\sum_{\mathclap{v\in\calV_{b-1}}}\ {x_{vu}}=1 \text{ and } {x_{uu}}=0, \forall u\in\calV_{b-1}. \nonumber
\end{align}
\normalsize
Note that, if the sets of nodes available for the source and destination nodes differ (i.e., a bipartite dynamic graph), the problem changes to a rectangular assignment problem~\citep{crouse2016implementing}.
To solve this problem, we employ the Hungarian algorithm~\citep{jonker1988shortest}, one of the most well-known solvers of an assignment problem.
Solving this problem ensures that the selected edges have non-overlapping nodes, which helps improve the unnoticeability when considering (C4).
Figure~\ref{fig:hungarian} illustrates the concept of the Hungarian selection and its impact on reducing the occurrence of overlapping nodes.
Furthermore, we apply the Hungarian algorithm $N$ times, excluding previously selected edges, to create the candidate pool consisting of the selected edges.
From this pool, we choose the $K$ edges having the lowest edge score as the adversarial edges.
Then, we randomly assign $\tilde{t}_{b,1},\cdots,\tilde{t}_{b,K}$ to the chosen edges.
This approach allows us to limit the maximum number of adversarial edges per node to $N$, ensuring balanced contamination of all nodes in the graph while maintaining stealthiness (C4).
In this paper, we maintain $N$ as the minimum number of times that all perturbations can be selected, i.e., $N=\lceil\frac{K}{E_{\max}(\calV_{b-1})}\rceil$, where $E_{\max}(\calV_{b-1})$ is the maximum number of non-overlapping edges can be made within $\calV_{b-1}$.


\section{Proposed Defense Method: \defense}\label{sec:defense}

The goal of adversarial defense is to enhance the model's robustness against perturbations and alleviate performance degradation. 
The defense for dynamic graphs is particularly challenging because the criteria for judging legitimate and adversarial edges change over time in CTDGs.
To attain this goal, we identify and eliminate potential adversarial edges from the corrupted graphs considering the evolution of dynamic graphs without prior knowledge of adversarial attacks.
Moreover, we enforce temporal smoothness on node embeddings to prevent sudden changes. 
In the following subsections, we elaborate on the proposed defense method called \defense.


\subsection{Filtering Potential Adversarial Edges}\label{sec:defense:filtering}

\defense selectively excludes potential adversarial edges from the training process. 
For this, we leverage the edge scores generated by the link classifier (i.e., edge scorer), which is trained alongside the TGNN model. 

For each edge $e=(u,v,t)$, we first compute its edge score $\hat{y}_{uvt}$, and if the edge score is less than a threshold $\tau\in(0,1)$, we classify it as a potential adversarial edge, i.e., $e\in\hat{\calE}$, where $\hat{\calE}$ is the set of potential adversarial edges.
Subsequently, we can train the model on the cleaned dynamic graph $\calG_{c}=(\calV_{c},\calE_{c})$, where $\calV_{c}$ is the set of remaining nodes and $\calE_{c}={\calE\cup\tilde{\calE}}-\hat{\calE}$, obtained by removing the set of potential adversarial edges $\hat{\calE}$ from the corrupted dynamic graph. 

However, using a fixed threshold for edge filtering could lead to two significant issues. 
First, in the initial stages of training, the model and edge scorer might not be sufficiently trained, resulting in low confidence in the edge scores. Setting a high fixed threshold during this phase could remove numerous legitimate edges, impacting the model's performance.
Second, as training progresses and the model and edge scorer become well-trained, the edge scores become more reliable. In this later stage, using a low fixed threshold could increase the likelihood of failing to filter out adversarial edges effectively, leaving the model vulnerable to adversarial attacks.
To address these concerns, we employ the cosine annealing scheduler to gradually increase the threshold from $\tau_{s}$ to $\tau_{e}$ where $0 \leq \tau_s \leq \tau_e \leq 1$.
This approach provides a more adaptive and balanced filtering approach during the training process. The cleaned dynamic graph is then trained with the loss function $\calL_{\calE_c}$ (Eq.~\eqref{eq:link_loss}).

\subsection{Temporal Smoothness}\label{sec:defense:temp}
Adversarial edges may change the node embeddings dramatically, preventing the model from learning the dynamics graphs.
We ensure the temporal smoothness of node embeddings to mitigate the impact of adversarial edges that evade edge filtering.
Maintaining temporal smoothness limits the difference between two consecutive embeddings to be relatively small, preventing the embeddings from deviating too dramatically from recent history.
To impose temporal smoothness, we introduce a regularization term as follows:
\begin{equation}
    \calL_{tmp}=-\!\!\!\!\!\! \sum_{(u,v,t)\in\calE_{c}}{\sum_{i\in\{u,v\}}{\!\!\!w(t-t_{i}^{-})\textnormal{sim}(\vech_{i}(t),\vech_{i}(t_{i}^{-}))}},
\end{equation}
where $w(\Delta t)=\exp{(-\theta\Delta t)}$ is the weight function that assigns higher weights to interactions within shorter periods and $\textnormal{sim}(\cdot)$ denotes the cosine similarity. 
We set $\theta$ to 0.01.

Finally, by integrating edge filtering and temporal regularization, the loss function of \defense is the weighted sum of $\calL_{\calE_{c}}$ and $\calL_{tmp}$, i.e., $\calL=\calL_{\calE_{c}}+\lambda\calL_{tmp}$, where $\lambda$ is a hyperparameter to control the extent of temporal smoothness.

\section{Complexity Analysis}\label{sec:complexity}
The time complexities of \attack and \defense are presented in Lemmas~\ref{lemma:attack} and~\ref{lemma:defense}, respectively.
To focus on the time complexities of the proposed methods, we denote the time complexity of TGN for the inference of a single node's embedding as $C_{tgn}$ and the time complexity of the edge scorer as $C_{clf}$.
Here, $d$ denotes the dimension of node embeddings.
The proofs are provided in Appendix B.
\begin{lemma}[Time complexity of \attack]\label{lemma:attack}
    The time complexity of \attack is $O\big((C_{tgn}+|W|\cdot C_{clf}+p\cdot|W|^{2})\cdot|\calE|\big)$.
\end{lemma}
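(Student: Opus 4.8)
The plan is to account for the total cost of \attack by summing the per-batch costs over all batches, since the algorithm processes the dynamic graph batch by batch. First I would fix the advanced (batch) version of the attack described in Section~\ref{sec:attack:batch}, since this is the practical variant and its complexity naturally produces the stated bound. I would let the batch size be $|B|=\lfloor|W|/2\rfloor$, so the number of batches is $\Theta(|\calE|/|W|)$; multiplying any per-batch cost by this count and showing the $|W|$ factors cancel or combine appropriately is the basic bookkeeping strategy.

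The main work is to itemize the per-batch cost into its three constituent operations, matching the three summands in the bound. First, training/inference of the surrogate \TGN to obtain the node embeddings $\vech_{u}(\tilde{t}_{b,1})$ for the nodes in $\calV_{b-1}$: since $|\calV_{b-1}|=O(|W|)$ and each embedding costs $C_{tgn}$, but amortized over the whole sequence this contributes $O(C_{tgn}\cdot|\calE|)$ in total (each node's embedding is computed as its interactions are processed). Second, computing the edge scores $\hat{y}_{uv\tilde{t}_{b,1}}$ for all node pairs within $\calV_{b-1}$: there are $O(|W|^{2})$ pairs, each costing $C_{clf}$, giving $O(|W|^{2}\cdot C_{clf})$ per batch; however, I expect the intended accounting charges the scorer as $O(|W|\cdot C_{clf})$ per edge processed, yielding the $|W|\cdot C_{clf}\cdot|\calE|$ term, so I would be careful to reconcile which quantity $C_{clf}$ measures. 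Third, the Hungarian selection: solving the assignment problem on $O(|W|)$ nodes costs $O(|W|^{3})$ by the classic algorithm, but I would argue the dominant matching/selection cost relevant here scales as $O(p\cdot|W|^{2})$ per batch once the repeated-matching structure and the number $K=\lfloor p\cdot|B|\rfloor$ of selected edges are taken into account. Summing $O(|W|^{2})\cdot\Theta(|\calE|/|W|)=O(|W|\cdot|\calE|)$-style terms and folding in the factor $p$ from the perturbation rate gives the $p\cdot|W|^{2}\cdot|\calE|$ summand.

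The hard part will be justifying the third term, the Hungarian-selection cost, because the naive Hungarian algorithm is cubic in the number of candidate nodes, which would overshoot the claimed $p\cdot|W|^{2}$ factor. I would need to argue that either (i) the effective dimension of the assignment problem that matters for the bound is controlled by $K=\Theta(p\cdot|B|)=\Theta(p\cdot|W|)$ rather than $|W|$, or (ii) the relevant per-batch selection work — constructing the score matrix and extracting the $K$ lowest-score non-overlapping edges over the $N$ repetitions — is what drives the $p\cdot|W|^{2}$ term, with the cubic solver cost being subsumed or treated as lower order under the paper's conventions. I would therefore make the cost model for the assignment step fully explicit before summing, since any ambiguity there is where the bound could fail to match.

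Finally I would assemble the three per-batch contributions, multiply by the batch count $\Theta(|\calE|/|W|)$, and verify the $|W|$ cancellations: $C_{tgn}$ amortizes to $C_{tgn}\cdot|\calE|$, the scorer contributes $|W|\cdot C_{clf}\cdot|\calE|$, and the selection contributes $p\cdot|W|^{2}\cdot|\calE|$, which together give the claimed $O\big((C_{tgn}+|W|\cdot C_{clf}+p\cdot|W|^{2})\cdot|\calE|\big)$.
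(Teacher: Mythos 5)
Your overall route is the same as the paper's: itemize the per-batch cost of the batched variant into (a) embedding computation, (b) pairwise edge scoring, and (c) Hungarian selection, then multiply by the batch count $|\calE|/|B|$ with $|B|=\lfloor|W|/2\rfloor$. Your first item matches the paper ($O(|B|\cdot C_{tgn})$ per batch, hence $O(C_{tgn}\cdot|\calE|)$ in total), and your hedge on the second is unnecessary: the paper charges exactly the quadratic cost $O(|B|^{2}\cdot C_{clf})$ per batch that you first wrote down, and $O(|B|^{2}\cdot C_{clf})\cdot\frac{|\calE|}{|B|}=O(|W|\cdot C_{clf}\cdot|\calE|)$ yields the middle term directly; there is no alternative ``per edge processed'' accounting to reconcile.

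The genuine gap is the third term, where you never identify the paper's mechanism and both of your candidate rationalizations fail. The paper keeps the full cubic cost $O(|B|^{3})$ per Hungarian run --- the assignment problem is always solved over all of $\calV_{b-1}$, so your alternative (i), shrinking the effective dimension to $K$, is incorrect; and your alternative (ii) would double-count the score-matrix construction, which is already the $|B|^{2}\cdot C_{clf}$ scoring term. The factor $p$ enters through the \emph{repetition count}: \attack reruns the Hungarian algorithm $N=\lceil K/E_{\max}(\calV_{b-1})\rceil$ times to build the candidate pool, and since $E_{\max}(\calV_{b-1})=\Theta(|B|)$ and $K=\lfloor p\cdot|B|\rfloor$, the paper takes $N=O\big(\frac{p\cdot|B|}{|B|}\big)=O(p)$, giving $O(p\cdot|B|^{3})$ per batch and hence $O(p\cdot|B|^{3})\cdot\frac{|\calE|}{|B|}=O(p\cdot|W|^{2}\cdot|\calE|)$ in total. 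Note also that your target per-batch figure of $O(p\cdot|W|^{2})$ is off by a factor of $|W|$: to produce the stated total after multiplying by $\Theta(|\calE|/|W|)$ batches, the per-batch selection cost must be $O(p\cdot|W|^{3})$, which is exactly what $N$ cubic runs deliver. Your instinct that this step is delicate is not baseless, but for a different reason than you suggest: the paper silently drops the ceiling in $N$, and since $N\geq 1$ a single mandatory cubic run per batch already costs $O(|W|^{2}\cdot|\calE|)$ overall, so the $p$ in the stated bound reflects the convention $N\approx p$ rather than a worst-case guarantee --- that looseness belongs to the paper, and is not the justification you were searching for.
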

\begin{lemma}[Time complexity of \defense]\label{lemma:defense}
    The time complexity of \defense is $O\big((C_{tgn}+C_{clf}+d)\cdot|\calE|\big)$.
\end{lemma}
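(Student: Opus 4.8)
The plan is to bound the per-edge work of the two components of \defense---edge filtering and the temporal-smoothness regularizer---and then multiply by the number of edges processed during training. First I would note that the corrupted graph $\tilde{\calG}=(\calV,\calE\cup\tilde{\calE})$ contains $|\calE|+\Delta=|\calE|+\lfloor p\cdot|\calE|\rfloor=O(|\calE|)$ edges for a fixed perturbation rate $p$, and the cleaned graph $\calG_c=(\calV_c,\calE_c)$ has at most this many, so every pass that \defense makes is an outer loop over $O(|\calE|)$ edges. The claim then reduces to showing that the amortized cost of handling a single edge is $O(C_{tgn}+C_{clf}+d)$.

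For the filtering step, processing an edge $(u,v,t)$ requires the embeddings $\vech_u(t)$ and $\vech_v(t)$, each produced by one TGN inference at cost $C_{tgn}$, followed by a single edge-scorer evaluation at cost $C_{clf}$ to obtain $\hat{y}_{uvt}$; comparing this score against the (annealed) threshold $\tau$ to decide membership in $\hat{\calE}$ is $O(1)$. Thus filtering costs $O(C_{tgn}+C_{clf})$ per edge. For the regularizer $\calL_{tmp}$, the contribution of edge $(u,v,t)$ sums over its two endpoints $i\in\{u,v\}$ the quantity $w(t-t_i^-)\,\textnormal{sim}(\vech_i(t),\vech_i(t_i^-))$. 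The current embeddings $\vech_u(t),\vech_v(t)$ are already available from the filtering step, the exponential weight $w(\Delta t)$ is $O(1)$ to evaluate, and each cosine similarity of two $d$-dimensional vectors is $O(d)$; over the two endpoints this is $O(d)$ per edge. Summing the two contributions gives $O(C_{tgn}+C_{clf}+d)$ per edge, and multiplying by $O(|\calE|)$ edges yields the stated bound $O\big((C_{tgn}+C_{clf}+d)\cdot|\calE|\big)$.

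The point that needs the most care, and which I expect to be the main obstacle, is justifying that the previous embedding $\vech_i(t_i^-)$ in $\calL_{tmp}$ is retrievable in $O(1)$ rather than demanding a second TGN inference---otherwise the smoothness term would contribute an extra $C_{tgn}$ factor (harmless asymptotically here, but it should be explained rather than assumed). This follows from the fact that TGN maintains a node-state cache, so the last-updated embedding of each node is stored and read in constant time. A secondary detail to confirm is that the cosine-annealing threshold update is a constant-time operation per step and therefore does not enter the asymptotics, which is immediate. With these two observations in place, the per-edge accounting above is complete and the lemma follows.
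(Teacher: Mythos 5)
Your proposal is correct and follows essentially the same argument as the paper's proof: a per-edge accounting of $O(C_{tgn})$ for embedding inference, $O(C_{clf})$ for the edge score, and $O(d)$ for the cosine similarity in $\calL_{tmp}$, multiplied by the $O(|\calE|)$ edges of the (corrupted or cleaned) graph, since $|\calE_c|\leq|\calE\cup\tilde{\calE}|=O(|\calE|)$. The only difference is cosmetic: the paper counts a second embedding-and-score computation for the remaining edges after filtering rather than reusing cached embeddings, and it leaves implicit the $O(1)$ retrieval of $\vech_i(t_i^{-})$ that you spell out---both bookkeeping choices yield the identical asymptotic bound.
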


\section{Experiments}\label{sec:exp}

\subsection{Experimental Setup}\label{sec:exp:setup}
\subsubsection{Datasets.}

We conduct experiments on 4 real-world datasets: Wikipedia~\citep{kumar2019predicting}, MOOC~\citep{feng2019understanding}, UCI~\citep{panzarasa2009patterns}, and Bitcoin-OTC~\citep{kumar2016edge}.
The details of them are provided in Appendix A.2.
Across all datasets, we use the same 70\%/15\%/15\% chronological splits for the train/validation/test sets as in \citep{rossi2020temporal}.
Then, to make attacked graphs, we inject perturbations on all the split sets using the attack method.

\begin{table*}[!t]
    \setlength{\tabcolsep}{5pt}
    \setlength\aboverulesep{0.5pt}
    \centering
    \scalebox{0.9}{
    \begin{tabular}{llccccccc}
        \toprule
        Dataset & Victim & Clean & \random & \pa & \jaccard\!* & \degree & \pagerank & \textbf{\attack} \\
        \midrule
        \multirow{4}{*}{Wikipedia}  & \TGN      & 80.5 $\pm$ 0.5 & 66.0 $\pm$ 0.6 & \ul{65.3 $\pm$ 1.1} & - & 66.4 $\pm$ 0.5 & 66.9 $\pm$ 0.9 & \tb{60.0 $\pm$ 1.6} \\
                                    & \JODIE    & 63.2 $\pm$ 1.2 & 36.3 $\pm$ 3.4 & 35.0 $\pm$ 2.3 & - & \tb{33.0 $\pm$ 2.1} & 35.0 $\pm$ 2.5 & \ul{33.8 $\pm$ 2.4} \\
                                    & \TGAT     & 57.0 $\pm$ 0.6 & 41.4 $\pm$ 0.6 & \tb{39.2 $\pm$ 1.0} & - & 41.1 $\pm$ 0.6 & 40.8 $\pm$ 0.5 & \ul{40.7 $\pm$ 0.5} \\
                                    & \DySAT    & 66.6 $\pm$ 0.4 & 63.0 $\pm$ 0.5 & 62.4 $\pm$ 0.5 & - & \ul{62.1 $\pm$ 0.2} & 62.8 $\pm$ 0.4 & \tb{62.0 $\pm$ 0.4} \\
        \midrule
        \multirow{4}{*}{MOOC}   & \TGN          & 61.8 $\pm$ 2.1 & 55.4 $\pm$ 1.7 & 56.0 $\pm$ 1.6 & - & \ul{50.3 $\pm$ 1.4} & 53.2 $\pm$ 1.2 & \tb{48.2 $\pm$ 1.3} \\
                                & \JODIE        & 37.4 $\pm$ 2.0 & \ul{23.8 $\pm$ 0.6} & 29.3 $\pm$ 3.9 & - & 26.2 $\pm$ 1.8 & 26.6 $\pm$ 1.6 & \tb{22.8 $\pm$ 1.9} \\
                                & \TGAT         & 11.8 $\pm$ 0.1 & 10.2 $\pm$ 0.2 & 10.0 $\pm$ 0.1 & - & 10.7 $\pm$ 0.2 & \ul{9.6 $\pm$ 0.1} & \tb{9.3 $\pm$ 0.2} \\
                                & \DySAT        & 18.4 $\pm$ 0.1 & 14.5 $\pm$ 0.1 & 14.5 $\pm$ 0.3 & - & 14.5 $\pm$ 0.2 & \ul{14.4 $\pm$ 0.2} & \tb{14.3 $\pm$ 0.2} \\
        \midrule
        \multirow{4}{*}{UCI}    & \TGN      & 44.2 $\pm$ 0.4 & 42.5 $\pm$ 0.3 & 44.2 $\pm$ 0.9 & 43.0 $\pm$ 0.5 & 42.5 $\pm$ 0.5 & 42.9 $\pm$ 0.8 & \tb{41.6 $\pm$ 0.3} \\
                                & \JODIE    & 24.9 $\pm$ 0.4 & 21.9 $\pm$ 0.5 & 22.3 $\pm$ 0.4 & 21.9 $\pm$ 0.2 & 21.8 $\pm$ 0.6 & \tb{21.4 $\pm$ 0.5} & \ul{21.7 $\pm$ 0.2} \\
                                & \TGAT     & 16.3 $\pm$ 0.3 & 15.5 $\pm$ 0.3 & \ul{14.7 $\pm$ 0.1} & 14.9 $\pm$ 0.2 & 14.7 $\pm$ 0.2 & 15.7 $\pm$ 0.4 & \tb{14.6 $\pm$ 0.2} \\
                                & \DySAT    & 71.3 $\pm$ 0.9 & 65.3 $\pm$ 0.7 & \ul{65.3 $\pm$ 0.3} & 65.3 $\pm$ 0.5 & 65.4 $\pm$ 0.3 & 65.3 $\pm$ 0.6 & \tb{64.7 $\pm$ 0.4} \\
        \midrule
        \multirow{4}{*}{Bitcoin}    & \TGN      & 48.3 $\pm$ 0.7 & 48.0 $\pm$ 1.3 & 48.0 $\pm$ 1.0 & 48.4 $\pm$ 0.9 & 46.8 $\pm$ 0.5 & \ul{46.2 $\pm$ 0.7} & \tb{45.2 $\pm$ 1.4} \\
                                    & \JODIE    & 34.6 $\pm$ 0.7 & 31.4 $\pm$ 0.5 & 30.3 $\pm$ 0.6 & 32.1 $\pm$ 0.6 & 29.2 $\pm$ 0.8 & \tb{27.9 $\pm$ 0.9} & \ul{28.9 $\pm$ 0.7} \\
                                    & \TGAT     & 21.8 $\pm$ 0.4 & 18.6 $\pm$ 0.3 & 17.3 $\pm$ 0.3 & 18.5 $\pm$ 0.4 & 17.1 $\pm$ 0.4 & \tb{16.6 $\pm$ 0.6} & \ul{17.1 $\pm$ 0.2} \\
                                    & \DySAT    & 84.7 $\pm$ 0.7 & 79.0 $\pm$ 0.8 & \ul{78.9 $\pm$ 0.2} & 79.4 $\pm$ 0.6 & 79.0 $\pm$ 0.4 & 79.0 $\pm$ 0.2 & \tb{78.8 $\pm$ 0.2} \\
        \midrule    
        \multicolumn{2}{c}{A.P.D.$\downarrow$} & - & -14.7\% & -14.9\% & -7.5\% &-16.3\% & \ul{-16.4\%} & \tb{-18.9\%} \\         
        \bottomrule             
        \multicolumn{9}{l}{ *\jaccard cannot be applied to bipartite graphs.}
    \end{tabular}}
    \caption{MRR at a perturbation rate $p=0.3$. 
    For each victim model, the best and second-best performances are highlighted in boldface and underlined, respectively. 
    A.P.D. stands for average performance degradation.}
    \label{tab:attack}
\end{table*}

\subsubsection{Victim Models.}

We assess the proposed methods using 4 TGNNs: TGN~\citep{rossi2020temporal}, JODIE~\citep{kumar2019predicting}, TGAT~\citep{xu2020inductive}, and DySAT\footnote{Technically, DySAT is a discrete-time TGNN.}~\citep{sankar2018dynamic} (see Section~\ref{sec:related} for their summaries).
Our evaluation mainly focuses on TGN (see Section~\ref{sec:attack:surrogate} for its details), while the other models are utilized to verify the transferability of our proposed attack.

\subsubsection{Baselines.}
We compare \attack and \defense with various baseline approaches including 5 edge-perturbation-based adversarial attacks and 2 defense methods. 
A detailed description of these baselines is provided in Appendix A.3.

\subsubsection{Evaluation Protocols.}
We adopt the Mean Reciprocal Rank (MRR) and hit rate (specifically, Hit@10) of the true edge among 100 randomly selected negative edges (without replacement) as a validation metric.
Note that, all metrics are normalized by multiplying 100.

Because our settings involve perturbations in both the validation and test sets, we evaluate the metrics for all edges (including both ground truth and adversarial edges) in the validation set.
For the test set, we only evaluate the metrics for the ground truth edges to be a fair comparison.
Especially, since \defense and \tgncosine employ edge filtering, the evaluation process differs from others; for the validation set, we assess the metrics for the remaining edges after edge filtering, on the other hand, for the test set, we evaluate the metrics for the ground-truth edges before edge filtering.
All the results are averaged over 5 runs. 
The results evaluated by Hit@10 are provided in Appendix C.

\subsubsection{Implementation Details.}
Details on the implementation and hyperparameters of all victim models, attack models, and defense models are provided in Appendix A.4.

\begin{figure}
    \includegraphics[width=0.6\columnwidth]{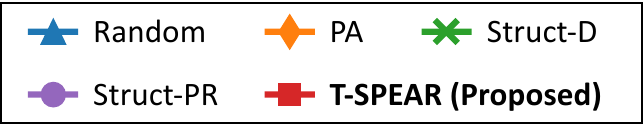}
    \centering
    \begin{subfigure}{0.495\columnwidth}
        \includegraphics[width=\columnwidth]{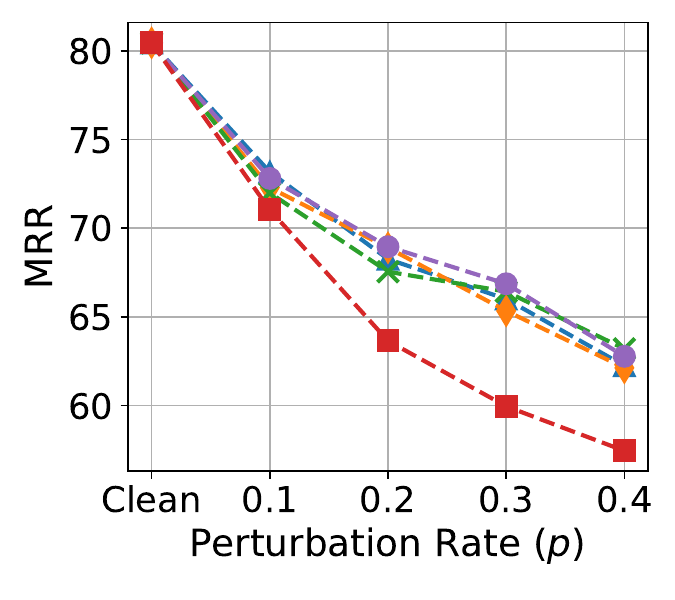}
    \end{subfigure}
    \begin{subfigure}{0.495\columnwidth}
        \includegraphics[width=\columnwidth]{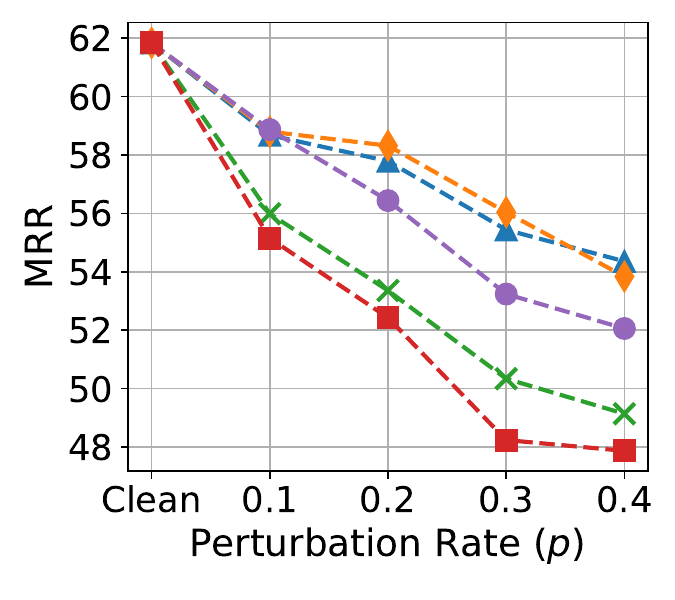}
    \end{subfigure}
    \caption{Link prediction performance of TGN on Wikipedia (left) and MOOC (right) as the perturbation rate increases.}
    \label{fig:performance}
\end{figure}

\begin{table*}[!t]
    \setlength{\tabcolsep}{3.5pt}
    \setlength\aboverulesep{0.5pt}
    \centering
    \scalebox{0.9}{
    \begin{tabular}{lccccccccccc}
        \toprule
        \multirow{2}{*}[-1mm]{Model} & \multicolumn{3}{c}{Wikipedia (Clean: 80.5 $\pm$ 0.5)} & \multicolumn{3}{c}{MOOC (Clean: 61.8 $\pm$ 2.1)} & \multicolumn{3}{c}{UCI (Clean: 44.2 $\pm$ 0.4)} & \multirow{2}{*}[-1mm]{A.P.G.$\uparrow$} \\
        \cmidrule(lr){2-4} \cmidrule(lr){5-7} \cmidrule(lr){8-10} 
        & $p=0.1$ & $p=0.2$ & $p=0.3$ & $p=0.1$ & $p=0.2$ & $p=0.3$ & $p=0.1$ & $p=0.2$ & $p=0.3$ \\
        \midrule

        \TGN            & 73.2 $\pm$ 1.1        & 68.3 $\pm$ 0.9       & 66.0 $\pm$ 0.6        & 58.7 $\pm$ 1.0        & 57.8 $\pm$ 0.9        & 55.4 $\pm$ 1.7        & 44.5 $\pm$ 0.3 & 43.5 $\pm$ 0.7 & 42.5 $\pm$ 0.3 & - \\
        \tgnsvd         & 65.3 $\pm$ 1.1 & 63.9 $\pm$ 1.2 & 60.6 $\pm$ 0.9 & 58.8 $\pm$ 0.3 & 58.0 $\pm$ 1.4 & 55.0 $\pm$ 1.1 & \tb{44.5 $\pm$ 0.7} & \tb{44.3 $\pm$ 0.5} & \tb{44.7 $\pm$ 0.7} & -2.1\% \\
        \tgncosine      & 75.8 $\pm$ 0.5 & 71.6 $\pm$ 0.7 & 68.8 $\pm$ 0.4 & 53.7 $\pm$ 2.7 & 52.8 $\pm$ 4.1 & 54.0 $\pm$ 3.2 & 43.2 $\pm$ 0.6 & 43.0 $\pm$ 0.6 & 43.8 $\pm$ 0.4 & -0.9\% \\
        \midrule
        \textbf{\defensefilter}  & \tb{77.4 $\pm$ 0.5} & \tb{77.4 $\pm$ 0.2} & \ul{76.0 $\pm$ 0.3} & \ul{59.2 $\pm$ 1.0} & \tb{58.8 $\pm$ 1.3} & \ul{58.0 $\pm$ 0.6} & \ul{44.1 $\pm$ 0.2} & 44.2 $\pm$ 0.6 & 44.2 $\pm$ 0.7 & \ul{+5.1\%} \\
        \textbf{\defense}        & \ul{77.3 $\pm$ 0.5} & \ul{77.0 $\pm$ 0.3} & \tb{76.5 $\pm$ 0.5} & \tb{59.4 $\pm$ 1.2} & \ul{58.4 $\pm$ 1.1} & \tb{58.4 $\pm$ 0.6} & 43.9 $\pm$ 0.6 & \ul{44.3 $\pm$ 0.7} & \ul{44.6 $\pm$ 0.8} & \tb{+5.3\%} \\
        
        \bottomrule             
    \end{tabular}}
    \caption{MRR under \random attack. A.P.G. stands for average performance gain over the na\"ive TGN.}
    \label{tab:defense:random}
\end{table*}

\begin{table*}[!t]
    \setlength{\tabcolsep}{3.5pt}
    \setlength\aboverulesep{0.5pt}
    \centering
    \scalebox{0.9}{
    \begin{tabular}{lccccccccccc}
        \toprule
        \multirow{2}{*}[-1mm]{Model} & \multicolumn{3}{c}{Wikipedia (Clean: 80.5 $\pm$ 0.5)} & \multicolumn{3}{c}{MOOC (Clean: 61.8 $\pm$ 2.1)} & \multicolumn{3}{c}{UCI (Clean: 44.2 $\pm$ 0.4)} & \multirow{2}{*}[-1mm]{A.P.G.$\uparrow$} \\
        \cmidrule(lr){2-4} \cmidrule(lr){5-7} \cmidrule(lr){8-10} 
        & $p=0.1$ & $p=0.2$ & $p=0.3$ & $p=0.1$ & $p=0.2$ & $p=0.3$ & $p=0.1$ & $p=0.2$ & $p=0.3$ \\
        \midrule

        \TGN            & 71.0 $\pm$ 2.0        & 63.7 $\pm$ 2.6        & 60.0 $\pm$ 1.6        & 55.2 $\pm$ 0.6        & 52.4 $\pm$ 0.4        & 48.2 $\pm$ 1.3        & 43.6 $\pm$ 1.0 & 41.7 $\pm$ 0.5 & 41.6 $\pm$ 0.3 & - \\
        \tgnsvd         & 67.6 $\pm$ 0.9 & 61.2 $\pm$ 1.4 & 58.1 $\pm$ 1.8 & 55.7 $\pm$ 1.3 & 50.6 $\pm$ 2.6 & 50.1 $\pm$ 2.1 & \tb{46.0 $\pm$ 0.5} & \ul{45.5 $\pm$ 0.4} & \tb{46.5 $\pm$ 0.4} & +1.8\% \\
        \tgncosine      & 72.3 $\pm$ 0.8 & 62.9 $\pm$ 2.0 & 59.6 $\pm$ 2.1 & 53.5 $\pm$ 3.1 & 47.6 $\pm$ 1.7 & 45.2 $\pm$ 2.1 & 43.7 $\pm$ 0.4 & 44.1 $\pm$ 0.3 & 43.2 $\pm$ 1.3 & -1.0\% \\
        \midrule
        \textbf{\defensefilter}  & \tb{77.3 $\pm$ 0.5} & \ul{76.8 $\pm$ 0.3} & \ul{75.3 $\pm$ 0.4} & \ul{58.5 $\pm$ 0.5} & \ul{55.7 $\pm$ 0.9} & \tb{54.5 $\pm$ 0.9} & 44.4 $\pm$ 0.8 & \tb{45.5 $\pm$ 0.3} & 44.7 $\pm$ 0.4 & \ul{+11.0\%} \\
        \textbf{\defense}        & \ul{77.1 $\pm$ 0.2} & \tb{76.9 $\pm$ 0.3} & \tb{76.1 $\pm$ 0.3} & \tb{58.8 $\pm$ 0.7} & \tb{56.3 $\pm$ 1.2} & \ul{53.9 $\pm$ 1.1} & \ul{44.9 $\pm$ 0.3} & 44.3 $\pm$ 0.5 & \ul{45.4 $\pm$ 0.8} & \tb{+11.2\%} \\
        
        \bottomrule             
    \end{tabular}}
    \caption{MRR under \attack attack. A.P.G. stands for average performance gain over the na\"ive TGN.}
    \label{tab:defense:proposed}
\end{table*}

\subsection{Attack Performance}

We conduct a comprehensive comparison of our attack method against various baseline approaches. 
Table~\ref{tab:attack} summarizes the link prediction performance of all victim models under different attack methods with a perturbation rate of 0.3. 
Across all datasets and victim models, our proposed attack consistently outperforms the baselines (lower values indicate better attack performance). 
Overall, \attack shows the most substantial performance degradation, averaging -18.9\%. 
Moreover, our attack significantly degrades the performance of other TGNNs (i.e., JODIE, TGAT, and DySAT), which differ from the surrogate model (i.e., TGN) used by \attack.
In Figure~\ref{fig:performance}, we observe the drop in link prediction performance of TGN on Wikipedia and MOOC datasets as the number of adversarial edges increases.
\attack consistently achieves a more significant reduction in performance compared to other baselines in any perturbation rates.
In Appendix C, we assess the efficacy of our attack under the scenario where the attacker lacks knowledge about the embedding size of the victim model.

\attack demonstrates particular effectiveness when the victim models are TGN and JODIE.
This is because these models incorporate a memory module to retain historical node interactions.
As the perturbations generated by our attack are the least likely edges to be formed when considering the graphs' dynamics, these perturbations effectively distort the node memory compared to baselines, which only consider basic graph statistics such as node degree, PageRank centrality, and the number of common neighbors.
On the other hand, TGAT and DySAT operate without relying on node memory. 
They utilize a self-attention mechanism for adaptive message aggregation, which makes them less susceptible to perturbations over long periods. 
Nevertheless, since our attack injects perturbations considering the \textit{current dynamics} of graphs, it can substantially reduce performance.

Additionally, we found that connecting two nodes with low node centrality (in terms of degree and PageRank centrality) is also an effective attack. 
This demonstrates that low-degree attacks~\citep{hussain2021structack,ma2020towards}, which are effective in static graphs, also exhibit partial effectiveness in dynamic graphs.




\subsection{Defense Performance}

To demonstrate the generality of our proposed defense method, we evaluate its robustness under two distinct attacks: \random and \attack.
We first poison dynamic graphs using each attack, and then train the \defense and its baselines on the poisoned graphs to evaluate their link prediction performance.
Tables~\ref{tab:defense:random} and~\ref{tab:defense:proposed} summarize the performance of the defense models against the \random and \attack, respectively.
\defensefilter, which uses only edge filtering, is a variant of \defense to validate the effect of edge filtering itself. 
Notably, \defense and \defensefilter consistently improve the link prediction performance across different attack methods, datasets, and perturbation rates. 
It verifies that our proposed defense method is a suitable defense method for real-world situations where the attacker's details are unknown.
The primary performance improvement comes from edge filtering, but enforcing temporal smoothness on node embeddings provides additional enhancement.
On the other hand, baselines exhibit insufficient defense performance (even if carefully optimized), except for the UCI dataset.
It indicates that cosine similarity is not appropriate to filter out adversarial edges in bipartite graphs, and applying SVD to the entire training set can not capture the reliability of edges changing over time. 
To sum up, it is not straightforward to extend static defense methods to CTDGs.

Table~\ref{tab:defense:f1} shows the AUROC of filtering-based defense models in classifying adversarial edges at $p=0.3$, demonstrating that our proposed defense methods more accurately discriminate adversarial edges compared to the baselines.

\begin{table}[!t]
    \setlength{\tabcolsep}{4pt}
    \setlength\aboverulesep{0.5pt}
    \centering
    \scalebox{0.9}{
    \begin{tabular}{llccc}
        \toprule
        Attack & Model & Wikipedia & MOOC & UCI \\
        \midrule
        \multirow{3}{*}[0mm]{\random}   & \tgncosine        & 0.605 & 0.633 & 0.495 \\
                                        & \textbf{\defensefilter}    & \ul{0.848} & \ul{0.697} & \ul{0.531} \\
                                        & \textbf{\defense}          & \tb{0.850} & \tb{0.709} & \tb{0.534} \\
        \midrule
        \multirow{3}{*}[0mm]{\attack}   & \tgncosine        & 0.474 & 0.556 & 0.518 \\
                                        & \textbf{\defensefilter}    & \ul{0.858} & \tb{0.679} & \ul{0.608} \\
                                        & \textbf{\defense}          & \tb{0.864} & \ul{0.674} & \tb{0.620} \\
        \bottomrule
    \end{tabular}
    }
    \normalsize
    \caption{AUROC of adversarial edge classification at perturbation rate $p=0.3$.}
    \label{tab:defense:f1}
\end{table}

\section{Conclusion}

In this paper, we propose \attack, an effective poisoning adversarial attack method for TGNN-based link prediction on CTDGs.
While adhering to the four constraints for unnoticeability, we strategically generate adversarial edges, which are unlikely to be formed, and thus likely to corrupt the natural evolution of the dynamic graphs. Additionally, to mitigate the impact of adversarial attacks, we propose \defense, a robust training method for TGNNs.
By employing the edge filtering technique and enforcing temporal robustness to node embeddings, we improve the robustness of the victim mode.
Experimental results demonstrate that \attack significantly degrades victim model performance and is transferable across various TGNNs. 
Moreover, \defense yields significant performance improvements over the basic TGN model under different adversarial attacks.

\section*{Acknowledgements}
This work was supported by Institute of Information \& Communications Technology Planning \& Evaluation (IITP)
grant funded by the Korea government (MSIT) (No. 2022-0-00157,
Robust, Fair, Extensible Data-Centric Continual Learning) (No. 2019-
0-00075, Artificial Intelligence Graduate School Program (KAIST)).

\bibliography{aaai24}

\begin{thebibliography}{47}
\providecommand{\natexlab}[1]{#1}

\bibitem[{Bhatia et~al.(2020)Bhatia, Hooi, Yoon, Shin, and
  Faloutsos}]{bhatia2020midas}
Bhatia, S.; Hooi, B.; Yoon, M.; Shin, K.; and Faloutsos, C. 2020.
\newblock Midas: Microcluster-based detector of anomalies in edge streams.
\newblock In \emph{Proceedings of the AAAI Conference on Artificial
  Intelligence}, volume~34, 3242--3249.

\bibitem[{Bojchevski and G{\"u}nnemann(2019)}]{bojchevski2019adversarial}
Bojchevski, A.; and G{\"u}nnemann, S. 2019.
\newblock Adversarial attacks on node embeddings via graph poisoning.
\newblock In \emph{International Conference on Machine Learning}, 695--704.
  PMLR.

\bibitem[{Chen et~al.(2021)Chen, Zhang, Chen, Du, and Xuan}]{chen2021time}
Chen, J.; Zhang, J.; Chen, Z.; Du, M.; and Xuan, Q. 2021.
\newblock Time-aware gradient attack on dynamic network link prediction.
\newblock \emph{IEEE Transactions on Knowledge and Data Engineering}.

\bibitem[{Cho et~al.(2014)Cho, Merrienboer, Gulcehre, Bougares, Schwenk, and
  Bengio}]{cho2014learning}
Cho, K.; Merrienboer, B.; Gulcehre, C.; Bougares, F.; Schwenk, H.; and Bengio,
  Y. 2014.
\newblock Learning Phrase Representations using RNN Encoder-Decoder for
  Statistical Machine Translation.
\newblock In \emph{Proceedings of the 2014 Conference on Empirical Methods in
  Natural Language Processing}.

\bibitem[{Cong et~al.(2023)Cong, Zhang, Kang, Yuan, Wu, Zhou, Tong, and
  Mahdavi}]{cong2023we}
Cong, W.; Zhang, S.; Kang, J.; Yuan, B.; Wu, H.; Zhou, X.; Tong, H.; and
  Mahdavi, M. 2023.
\newblock Do We Really Need Complicated Model Architectures For Temporal
  Networks?
\newblock \emph{arXiv:2302.11636}.

\bibitem[{Crouse(2016)}]{crouse2016implementing}
Crouse, D.~F. 2016.
\newblock On implementing 2D rectangular assignment algorithms.
\newblock \emph{IEEE Transactions on Aerospace and Electronic Systems}, 52(4):
  1679--1696.

\bibitem[{Dai et~al.(2023)Dai, Lin, Zhang, and Wang}]{dai2023unnoticeable}
Dai, E.; Lin, M.; Zhang, X.; and Wang, S. 2023.
\newblock Unnoticeable backdoor attacks on graph neural networks.
\newblock In \emph{Proceedings of the ACM Web Conference 2023}, 2263--2273.

\bibitem[{Dai et~al.(2018)Dai, Li, Tian, Huang, Wang, Zhu, and
  Song}]{dai2018adversarial}
Dai, H.; Li, H.; Tian, T.; Huang, X.; Wang, L.; Zhu, J.; and Song, L. 2018.
\newblock Adversarial attack on graph structured data.
\newblock In \emph{International Conference on Machine Learning}, 1115--1124.
  PMLR.

\bibitem[{Entezari et~al.(2020)Entezari, Al-Sayouri, Darvishzadeh, and
  Papalexakis}]{entezari2020all}
Entezari, N.; Al-Sayouri, S.~A.; Darvishzadeh, A.; and Papalexakis, E.~E. 2020.
\newblock All you need is low (rank) defending against adversarial attacks on
  graphs.
\newblock In \emph{Proceedings of the 13th International Conference on Web
  Search and Data Mining}, 169--177.

\bibitem[{Feng, Tang, and Liu(2019)}]{feng2019understanding}
Feng, W.; Tang, J.; and Liu, T.~X. 2019.
\newblock Understanding dropouts in MOOCs.
\newblock In \emph{Proceedings of the AAAI Conference on Artificial
  Intelligence}, volume~33, 517--524.

\bibitem[{Fursov et~al.(2021)Fursov, Morozov, Kaploukhaya, Kovtun,
  Rivera-Castro, Gusev, Babaev, Kireev, Zaytsev, and
  Burnaev}]{fursov2021adversarial}
Fursov, I.; Morozov, M.; Kaploukhaya, N.; Kovtun, E.; Rivera-Castro, R.; Gusev,
  G.; Babaev, D.; Kireev, I.; Zaytsev, A.; and Burnaev, E. 2021.
\newblock Adversarial attacks on deep models for financial transaction records.
\newblock In \emph{Proceedings of the 27th ACM SIGKDD Conference on Knowledge
  Discovery \& Data Mining}, 2868--2878.

\bibitem[{Hooi et~al.(2016)Hooi, Song, Beutel, Shah, Shin, and
  Faloutsos}]{hooi2016fraudar}
Hooi, B.; Song, H.~A.; Beutel, A.; Shah, N.; Shin, K.; and Faloutsos, C. 2016.
\newblock Fraudar: Bounding graph fraud in the face of camouflage.
\newblock In \emph{Proceedings of the 22nd ACM SIGKDD International Conference
  on Knowledge Discovery \& Data Mining}, 895--904.

\bibitem[{Hou et~al.(2019)Hou, Fan, Zhang, Ye, Lei, Wan, Wang, Xiong, and
  Shao}]{hou2019alphacyber}
Hou, S.; Fan, Y.; Zhang, Y.; Ye, Y.; Lei, J.; Wan, W.; Wang, J.; Xiong, Q.; and
  Shao, F. 2019.
\newblock $\alpha$cyber: Enhancing robustness of android malware detection
  system against adversarial attacks on heterogeneous graph based model.
\newblock In \emph{Proceedings of the 28th ACM International Conference on
  Information and Knowledge Management}, 609--618.

\bibitem[{Hussain et~al.(2021)Hussain, Duricic, Lex, Helic, Strohmaier, and
  Kern}]{hussain2021structack}
Hussain, H.; Duricic, T.; Lex, E.; Helic, D.; Strohmaier, M.; and Kern, R.
  2021.
\newblock Structack: Structure-based adversarial attacks on graph neural
  networks.
\newblock In \emph{Proceedings of the 32nd ACM Conference on Hypertext and
  Social Media}, 111--120.

\bibitem[{Jin, Li, and Pan(2022)}]{jin2022neural}
Jin, M.; Li, Y.-F.; and Pan, S. 2022.
\newblock Neural temporal walks: Motif-aware representation learning on
  continuous-time dynamic graphs.
\newblock \emph{Advances in Neural Information Processing Systems}, 35:
  19874--19886.

\bibitem[{Jin et~al.(2021)Jin, Li, Xu, Wang, Ji, Aggarwal, and
  Tang}]{jin2021adversarial}
Jin, W.; Li, Y.; Xu, H.; Wang, Y.; Ji, S.; Aggarwal, C.; and Tang, J. 2021.
\newblock Adversarial attacks and defenses on graphs.
\newblock \emph{ACM SIGKDD Explorations Newsletter}, 22(2): 19--34.

\bibitem[{Jin et~al.(2020)Jin, Ma, Liu, Tang, Wang, and Tang}]{jin2020graph}
Jin, W.; Ma, Y.; Liu, X.; Tang, X.; Wang, S.; and Tang, J. 2020.
\newblock Graph structure learning for robust graph neural networks.
\newblock In \emph{Proceedings of the 26th ACM SIGKDD International Conference
  on Knowledge Discovery \& Data Mining}, 66--74.

\bibitem[{Jonker and Volgenant(1988)}]{jonker1988shortest}
Jonker, R.; and Volgenant, T. 1988.
\newblock A shortest augmenting path algorithm for dense and sparse linear
  assignment problems.
\newblock In \emph{DGOR/NSOR}, 622--622. Springer.

\bibitem[{Kazemi et~al.(2020)Kazemi, Goel, Jain, Kobyzev, Sethi, Forsyth, and
  Poupart}]{kazemi2020representation}
Kazemi, S.~M.; Goel, R.; Jain, K.; Kobyzev, I.; Sethi, A.; Forsyth, P.; and
  Poupart, P. 2020.
\newblock Representation learning for dynamic graphs: A survey.
\newblock \emph{The Journal of Machine Learning Research}, 21(1): 2648--2720.

\bibitem[{Kingma and Ba(2014)}]{kingma2014adam}
Kingma, D.~P.; and Ba, J. 2014.
\newblock Adam: A method for stochastic optimization.
\newblock \emph{arXiv:1412.6980}.

\bibitem[{Kumar et~al.(2016)Kumar, Spezzano, Subrahmanian, and
  Faloutsos}]{kumar2016edge}
Kumar, S.; Spezzano, F.; Subrahmanian, V.; and Faloutsos, C. 2016.
\newblock Edge weight prediction in weighted signed networks.
\newblock In \emph{2016 IEEE 16th International Conference on Data Mining},
  221--230. IEEE.

\bibitem[{Kumar, Zhang, and Leskovec(2019)}]{kumar2019predicting}
Kumar, S.; Zhang, X.; and Leskovec, J. 2019.
\newblock Predicting dynamic embedding trajectory in temporal interaction
  networks.
\newblock In \emph{Proceedings of the 25th ACM SIGKDD International Conference
  on Knowledge Discovery \& Data Mining}, 1269--1278.

\bibitem[{Li et~al.(2022)Li, Li, Feng, Yuan, Wang, and Zha}]{li2022robust}
Li, H.; Li, C.; Feng, K.; Yuan, Y.; Wang, G.; and Zha, H. 2022.
\newblock Robust knowledge adaptation for dynamic graph neural networks.
\newblock \emph{arXiv:2207.10839}.

\bibitem[{Liben-Nowell and Kleinberg(2003)}]{liben2003link}
Liben-Nowell, D.; and Kleinberg, J. 2003.
\newblock The link prediction problem for social networks.
\newblock In \emph{Proceedings of the 12th International Conference on
  Information and Knowledge Management}, 556--559.

\bibitem[{Liu et~al.(2019)Liu, Si, Zhu, Li, and Hsieh}]{liu2019unified}
Liu, X.; Si, S.; Zhu, X.; Li, Y.; and Hsieh, C.-J. 2019.
\newblock A unified framework for data poisoning attack to graph-based
  semi-supervised learning.
\newblock \emph{arXiv:1910.14147}.

\bibitem[{Ma, Ding, and Mei(2020)}]{ma2020towards}
Ma, J.; Ding, S.; and Mei, Q. 2020.
\newblock Towards more practical adversarial attacks on graph neural networks.
\newblock \emph{Advances in Neural Information Processing Systems}, 33:
  4756--4766.

\bibitem[{Mujkanovic et~al.(2022)Mujkanovic, Geisler, G{\"u}nnemann, and
  Bojchevski}]{mujkanovic2022defenses}
Mujkanovic, F.; Geisler, S.; G{\"u}nnemann, S.; and Bojchevski, A. 2022.
\newblock Are Defenses for Graph Neural Networks Robust?
\newblock \emph{Advances in Neural Information Processing Systems}, 35:
  8954--8968.

\bibitem[{Panzarasa, Opsahl, and Carley(2009)}]{panzarasa2009patterns}
Panzarasa, P.; Opsahl, T.; and Carley, K.~M. 2009.
\newblock Patterns and dynamics of users' behavior and interaction: Network
  analysis of an online community.
\newblock \emph{Journal of the American Society for Information Science and
  Technology}, 60(5): 911--932.

\bibitem[{Pareja et~al.(2020)Pareja, Domeniconi, Chen, Ma, Suzumura, Kanezashi,
  Kaler, Schardl, and Leiserson}]{pareja2020evolvegcn}
Pareja, A.; Domeniconi, G.; Chen, J.; Ma, T.; Suzumura, T.; Kanezashi, H.;
  Kaler, T.; Schardl, T.; and Leiserson, C. 2020.
\newblock Evolvegcn: Evolving graph convolutional networks for dynamic graphs.
\newblock In \emph{Proceedings of the AAAI Conference on Artificial
  Intelligence}, volume~34, 5363--5370.

\bibitem[{Paszke et~al.(2019)Paszke, Gross, Massa, Lerer, Bradbury, Chanan,
  Killeen, Lin, Gimelshein, Antiga et~al.}]{paszke2019pytorch}
Paszke, A.; Gross, S.; Massa, F.; Lerer, A.; Bradbury, J.; Chanan, G.; Killeen,
  T.; Lin, Z.; Gimelshein, N.; Antiga, L.; et~al. 2019.
\newblock Pytorch: An imperative style, high-performance deep learning library.
\newblock \emph{Advances in Neural Information Processing Systems}, 32.

\bibitem[{Rossi et~al.(2020)Rossi, Chamberlain, Frasca, Eynard, Monti, and
  Bronstein}]{rossi2020temporal}
Rossi, E.; Chamberlain, B.; Frasca, F.; Eynard, D.; Monti, F.; and Bronstein,
  M. 2020.
\newblock Temporal graph networks for deep learning on dynamic graphs.
\newblock \emph{arXiv:2006.10637}.

\bibitem[{Sankar et~al.(2018)Sankar, Wu, Gou, Zhang, and
  Yang}]{sankar2018dynamic}
Sankar, A.; Wu, Y.; Gou, L.; Zhang, W.; and Yang, H. 2018.
\newblock Dynamic graph representation learning via self-attention networks.
\newblock \emph{arXiv:1812.09430}.

\bibitem[{Seo et~al.(2018)Seo, Defferrard, Vandergheynst, and
  Bresson}]{seo2018structured}
Seo, Y.; Defferrard, M.; Vandergheynst, P.; and Bresson, X. 2018.
\newblock Structured sequence modeling with graph convolutional recurrent
  networks.
\newblock In \emph{International Conference on Neural Information Processing},
  362--373. Springer.

\bibitem[{Sharma et~al.(2023)Sharma, Trivedi, Sridhar, and
  Kumar}]{sharma2023temporal}
Sharma, K.; Trivedi, R.; Sridhar, R.; and Kumar, S. 2023.
\newblock Temporal Dynamics-Aware Adversarial Attacks on Discrete-Time Dynamic
  Graph Models.
\newblock In \emph{Proceedings of the 29th ACM SIGKDD Conference on Knowledge
  Discovery \& Data Mining}, 2023--2035.

\bibitem[{Sun et~al.(2022)Sun, Dou, Yang, Zhang, Wang, Philip, He, and
  Li}]{sun2022adversarial}
Sun, L.; Dou, Y.; Yang, C.; Zhang, K.; Wang, J.; Philip, S.~Y.; He, L.; and Li,
  B. 2022.
\newblock Adversarial attack and defense on graph data: A survey.
\newblock \emph{IEEE Transactions on Knowledge and Data Engineering}.

\bibitem[{Thomas et~al.(2022)Thomas, Moallemy-Oureh, Beddar-Wiesing, and
  Holzh{\"u}ter}]{thomas2022graph}
Thomas, J.~M.; Moallemy-Oureh, A.; Beddar-Wiesing, S.; and Holzh{\"u}ter, C.
  2022.
\newblock Graph neural networks designed for different graph types: A survey.
\newblock \emph{arXiv:2204.03080}.

\bibitem[{Vaswani et~al.(2017)Vaswani, Shazeer, Parmar, Uszkoreit, Jones,
  Gomez, Kaiser, and Polosukhin}]{vaswani2017attention}
Vaswani, A.; Shazeer, N.; Parmar, N.; Uszkoreit, J.; Jones, L.; Gomez, A.~N.;
  Kaiser, {\L}.; and Polosukhin, I. 2017.
\newblock Attention is all you need.
\newblock \emph{Advances in Neural Information Processing Systems}, 30.

\bibitem[{Wang et~al.(2019)Wang, Zheng, Ye, Gan, Li, Song, Zhou, Ma, Yu, Gai
  et~al.}]{wang2019deep}
Wang, M.; Zheng, D.; Ye, Z.; Gan, Q.; Li, M.; Song, X.; Zhou, J.; Ma, C.; Yu,
  L.; Gai, Y.; et~al. 2019.
\newblock Deep graph library: A graph-centric, highly-performant package for
  graph neural networks.
\newblock \emph{arXiv:1909.01315}.

\bibitem[{Wang et~al.(2021{\natexlab{a}})Wang, Lyu, Li, Xia, Yang, Wang, Wang,
  Cui, Yang, Sun et~al.}]{wang2021apan}
Wang, X.; Lyu, D.; Li, M.; Xia, Y.; Yang, Q.; Wang, X.; Wang, X.; Cui, P.;
  Yang, Y.; Sun, B.; et~al. 2021{\natexlab{a}}.
\newblock Apan: Asynchronous propagation attention network for real-time
  temporal graph embedding.
\newblock In \emph{Proceedings of the 2021 International Conference on
  Management of Data}, 2628--2638.

\bibitem[{Wang et~al.(2021{\natexlab{b}})Wang, Cai, Liang, Ding, Wang, and
  Hooi}]{wang2021time}
Wang, Y.; Cai, Y.; Liang, Y.; Ding, H.; Wang, C.; and Hooi, B.
  2021{\natexlab{b}}.
\newblock Time-Aware Neighbor Sampling for Temporal Graph Networks.
\newblock \emph{arXiv:2112.09845}.

\bibitem[{Wang et~al.(2021{\natexlab{c}})Wang, Chang, Liu, Leskovec, and
  Li}]{wang2021inductive}
Wang, Y.; Chang, Y.-Y.; Liu, Y.; Leskovec, J.; and Li, P. 2021{\natexlab{c}}.
\newblock Inductive representation learning in temporal networks via causal
  anonymous walks.
\newblock \emph{arXiv:2101.05974}.

\bibitem[{Xu et~al.(2020)Xu, Ruan, Korpeoglu, Kumar, and
  Achan}]{xu2020inductive}
Xu, D.; Ruan, C.; Korpeoglu, E.; Kumar, S.; and Achan, K. 2020.
\newblock Inductive representation learning on temporal graphs.
\newblock \emph{arXiv:2002.07962}.

\bibitem[{Xu et~al.(2019)Xu, Chen, Liu, Chen, Weng, Hong, and
  Lin}]{xu2019topology}
Xu, K.; Chen, H.; Liu, S.; Chen, P.-Y.; Weng, T.~W.; Hong, M.; and Lin, X.
  2019.
\newblock Topology attack and defense for graph neural networks: An
  optimization perspective.
\newblock In \emph{International Joint Conference on Artificial Intelligence}.

\bibitem[{Zeager et~al.(2017)Zeager, Sridhar, Fogal, Adams, Brown, and
  Beling}]{zeager2017adversarial}
Zeager, M.~F.; Sridhar, A.; Fogal, N.; Adams, S.; Brown, D.~E.; and Beling,
  P.~A. 2017.
\newblock Adversarial learning in credit card fraud detection.
\newblock In \emph{2017 Systems and Information Engineering Design Symposium},
  112--116. IEEE.

\bibitem[{Zhang and Zitnik(2020)}]{zhang2020gnnguard}
Zhang, X.; and Zitnik, M. 2020.
\newblock Gnnguard: Defending graph neural networks against adversarial
  attacks.
\newblock \emph{Advances in Neural Information Processing Systems}, 33:
  9263--9275.

\bibitem[{Zhou et~al.(2022)Zhou, Zheng, Nisa, Ioannidis, Song, and
  Karypis}]{zhou2022tgl}
Zhou, H.; Zheng, D.; Nisa, I.; Ioannidis, V.; Song, X.; and Karypis, G. 2022.
\newblock Tgl: A general framework for temporal gnn training on billion-scale
  graphs.
\newblock \emph{arXiv:2203.14883}.

\bibitem[{Z{\"u}gner, Akbarnejad, and
  G{\"u}nnemann(2018)}]{zugner2018adversarial}
Z{\"u}gner, D.; Akbarnejad, A.; and G{\"u}nnemann, S. 2018.
\newblock Adversarial attacks on neural networks for graph data.
\newblock In \emph{Proceedings of the 24th ACM SIGKDD International Conference
  on Knowledge Discovery \& Data Mining}, 2847--2856.

\end{thebibliography}

\clearpage
\appendix
\section{Experiment Setup Details}
\subsection{Hardware Specification and Environment}

All experiments are performed on a server with NVIDIA RTX A6000 GPUs (48GB memory), 512GB of RAM, and two Intel Xeon Silver 4210R Processors.
Our models are implemented using PyTorch 1.12.1~\citep{paszke2019pytorch} and Deep Graph Library (DGL) 0.9.1~\citep{wang2019deep}.

\subsection{Details on Dataset}

We list the four datasets used in this paper.
\begin{itemize}
    \item \textbf{Wikipedia}\footnote{\url{http://snap.stanford.edu/jodie/wikipedia.csv}} is a user action dataset that captures interactions between users and Wikipedia pages.
    This dataset contains 157,474 attributed interactions recorded over one month, involving 8,227 users and 1,000 pages. 
    Each interaction represents a user editing a page, with the editing texts converted into LIWC-feature vectors.
    
    \item \textbf{MOOC}\footnote{\url{http://snap.stanford.edu/jodie/mooc.csv}} is a user action dataset containing actions performed by users on the MOOC platform. 
    It comprises interactions from 7,047 users with 97 items, resulting in 411,749 attributed interactions recorded over approximately one month. 
    These interactions represent the access behavior of students to online course units. 
    As edge features are not available in this dataset, we did not use them in our experiment.
    
    \item \textbf{UCI}\footnote{\url{https://snap.stanford.edu/data/CollegeMsg.html}} dataset contains message interactions among users within an online social media platform at the University of California, Irvine. 
    It consists of 59,835 message interactions involving 1,899 unique users.
    As edge features are not available in this dataset, we did not use them in our experiment.

    \item \textbf{Bitcoin-OTC}\footnote{\url{https://snap.stanford.edu/data/soc-sign-bitcoin-otc.html}} is a who-trusts-whom network of people who trade Bitcoin on the Bitcoin OTC platform. 
    Due to the anonymity of Bitcoin users, preserving records of user reputations is necessary to prevent transactions with fraudulent and high-risk users.
    The dataset encompasses 35,592 interactions among 5,881 nodes.
    As edge features are not available in this dataset, we did not use them in our experiment.
\end{itemize}

\subsubsection{Edge Features for Adversarial Edges}
Our proposed adversarial attack method can be extended to attributed dynamic graphs. 
For this purpose, when creating adversarial edges, we must generate not only their timing and two endpoints but also the edge features for them. The subsequent description relates to the constraints on generating edge features for adversarial edges.

For unnoticeability, we impose a distribution constraint on the edge features of the adversarial edges.
When generating an edge features $\tilde{\vece}_{\tilde{u}\tilde{v}}$ of an adversarial edge $\tilde{e}=(\tilde{u},\tilde{v},\tilde{t})$, where $t_{i}\leq\tilde{t}\leq t_{i+1}$, we ensure that the edge features $\tilde{\vece}_{\tilde{u}\tilde{v}}$ follows the edge feature distribution observed in the previous $W$ edges, i.e., $\tilde{\vece}_{\tilde{u}\tilde{v}}\sim P_{\vece}(\matE_{i,W})$.
Here, $P_{\vece}$ is the edge feature distribution, and $\matE_{i,W}$ is the set of edge features of the previous $W$ edges.
Specifically, we use Kernel Density Estimation (KDE) to approximate the edge feature distribution of the original edges. 
We use the Gaussian kernel with a bandwidth of 0.1. The edge features of the adversarial edges are drawn from the estimated edge feature distribution.

\begin{table}[h]
    \centering
    \scalebox{0.85}{
    \begin{tabular}{lcccccc}
        \toprule
        & $|\calV|$ & $|\calE|$ & $t_{max}$ & $|d_{e}|$ & Bipartite \\
        \midrule
        Wikipedia & 9,227 & 157,474 & $2.7\times 10^{6}$ & 172 & \ding{51} \\
        MOOC & 7,144 & 411,749 & $2.6\times 10^{6}$ & - & \ding{51} \\
        UCI & 1,899 & 59,835 & $1.6\times 10^{7}$ & - & - \\
        Bitcoin-OTC & 5,881 & 35,592 & $1.6\times 10^{8}$ & - & - \\
        \bottomrule
    \end{tabular}
    }
    \caption{Statistics of datasets used in our experiments.}
    \label{tab:dataset}
\end{table}






\subsection{Baselines}

\subsubsection{Attack Baselines.}
Since there are no existing adversarial attack methods on CTDGs to be used as baselines, we implemented five edge-perturbation baselines for comparison: 
\begin{enumerate}[leftmargin=*,label=(\arabic*),wide=0pt,noitemsep,topsep=0pt,parsep=0pt,partopsep=0pt]
    \item \textbf{\random}: It randomly links two nodes.
    \item \textbf{\pa}: It links nodes $u$ and $v$ with the lowest Preferential Attachment (PA) score~\citep{liben2003link} $|\calN(u)|\cdot|\calN(v)|$, the product of their neighbor counts. 
    \item \textbf{\jaccard}: It links nodes $u$ and $v$ with the lowest Jaccard coefficient~\citep{liben2003link}, calculated as $\frac{|\calN(u)\cap\calN(v)|}{|\calN(u)\cup\calN(v)|}$. As common neighbors cannot be identified among any pairs of nodes in bipartite graphs, we only apply this attack to unipartite graphs.
    \item \textbf{\degree}: Inspired by Structack~\citep{hussain2021structack}, it connects two nodes $u$ and $v$ by selecting the pair with the smallest sum of their neighbor counts, i.e., $|\calN(u)|+|\calN(v)|$.
    \item \textbf{\pagerank}: Similar to \degree, it connects two nodes $u$ and $v$ by selecting the pair with the smallest sum of their PageRank centralities, i.e., $PR(u)+PR(v)$.
\end{enumerate}
We consider these baselines under the constraints (C1-C4).
Note that, for baselines (2)-(5), we construct a plain graph using all the edges up to the point just before a perturbation is injected and subsequently compute their metrics.
In addition, for baselines (2)-(5), we employ our proposed Hungarian selection to create a pool for each batch. 

\subsubsection{Defense Baselines.}
As no established adversarial defense methods exist for CTDGs as baselines, we developed two baseline robust-training methods for comparison:
\begin{enumerate}[leftmargin=*,label=(\arabic*),wide=0pt,noitemsep,topsep=0pt,parsep=0pt,partopsep=0pt]
    \item \textbf{\tgnsvd}: Motivated by GCN-SVD~\citep{entezari2020all}, it builds a plain graph using all edges in the training set, and then computes a low-order approximation $\hat{\matA}$ of the adjacency matrix.
    It uses entries of $\hat{\matA}$ as weights in the loss function. 
    Specifically, for each edge $e=(u,v,t)$, $\hat{a}_{uv}$ is the weight of the edge's loss in Eq.~\eqref{eq:link_loss}.
    
    \item \textbf{\tgncosine}: Inspired by GNNGuard~\citep{zhang2020gnnguard}, it employs cosine similarity to filter edges.
    Specifically, it removes an edge $e=(u,v,t)$ if the cosine similarity between node embeddings $\vech_{u}(t)$ and $\vech_{v}(t)$ is below a predefined threshold $\tau_{cosine}$.
\end{enumerate}

\subsection{Implementation Details}
For all our experiments, we use a constant learning rate of 0.0001 while training all victim models, surrogate models for attack methods, and defense models. 
In particular, we employ the Adam optimizer~\citep{kingma2014adam} with a batch size of 600 and train the models for a total of 100 epochs (50 epochs for DySAT).
After training 50 epochs (25 epochs for DySAT), we apply early stopping if validation MRR does not improve for 10 epochs.


\subsubsection{Hyperparameters for the Victim Models}
The implementation of the victim models (TGN~\citep{rossi2020temporal}, JODIE~\citep{kumar2019predicting}, TGAT~\citep{xu2020inductive}, and DySAT~\citep{sankar2018dynamic}) follows the temporal graph learning framework, called TGL\footnote{The TGL framework can be found at \url{https://github.com/amazon-research/tgl}}~\citep{zhou2022tgl}.
TGL’s implementation could achieve a better overall score than the victim models' original implementation.
Hence, we employ TGL's default settings (e.g., learning rate 0.0001, batch size 600, hidden dimension 100, etc) for the victim models across all datasets.
The hyperparameters we used are summarized in Table~\ref{tab:hyper:victim}.

\begin{table}[h]
    \setlength{\tabcolsep}{4.5pt}
    \setlength\aboverulesep{0.5pt}
    \centering
    \small
    \begin{tabular}{lcccc}
        \toprule
        & \TGN & \JODIE & \TGAT & \DySAT \\
        \midrule
        Batch size & 600 & 600 & 600 & 600 \\ 
        Learning rate & 0.0001 & 0.0001 & 0.0001 & 0.0001 \\
        Node embedding dim & 100 & 100 & 100 & 100 \\
        Time embedding dim & 100 & 100 & 100 & 100 \\
        \# layers & 1 & - & 2 & 2 \\
        \# attention heads & 2 & - & 2 & 2\\
        \# sampled neighbors & 10 & - & 10 & 10 \\ 
        Dropout & 0.2 & 0.1 & 0.1 & 0.1 \\
        Duration* & - & - & - & 10,000 \\
        \bottomrule
        \multicolumn{5}{l}{*Length in time of each snapshot for discrete-time TGNNs.}\\
        \multicolumn{5}{l}{*For Bitcoin-OTC dataset, we use 100,000.}\\

    \end{tabular}
    \normalsize
    \caption{Hyperparameters for victim models.}
    \label{tab:hyper:victim}
\end{table}

\subsubsection{Hyperparameters for the Attack Models}

For all attack models, encompassing both \attack and baselines, we use a time window $W$ of 1200 to maintain unnoticeability. 
For \attack, we begin by training the surrogate model (TGN), and its hyperparameters mirror those of the victim TGN model by default.
In Appendix C, we validate the effectiveness of our attack when the attacker lacks knowledge about the internals (e.g., embedding size) of the victim model.

\subsubsection{Hyperparameters for the Defense Models}
All defense models utilize TGN as the backbone model. 
To ensure fairness in comparison, we adopt the same hyperparameters as the victim TGN model.
All additional hyperparameters are tuned using validation MRR under \random and \attack attacks with a perturbation rate of 0.3.
Empirically, the hyperparameters that perform well in both attacks are quite similar, hence we use the same hyperparameters for both scenarios.
Regarding \tgnsvd, an additional hyperparameter is introduced: the reduced rank of the perturbed graphs, which is tuned within a range of 10 to 100, with increments of 10.
For \tgncosine, we tune a threshold value $\tau_{cosine}$ from $\{0.01, 0.02, 0.03, 0.05, 0.1, 0.2\}$
In the case of \defense, we tune two thresholds, $\tau_{s}$ and $\tau_{e}$, within a range of 0 to 1, with steps of 0.1. 
Additionally, we search for a hyperparameter $\lambda$, which controls the level of temporal smoothness, from $\{0.01, 0.05, 0.1, 0.5, 1\}$.
Table~\ref{tab:hyper:defense:proposed} provides our chosen hyperparameters.

\begin{table}[h]
    \setlength{\tabcolsep}{5pt}
    \setlength\aboverulesep{0.5pt}
    \centering
    \small
    \begin{tabular}{lcccc}
        \toprule
        & Wikipedia & MOOC & UCI & Bitcoin-OTC \\
        \midrule
        Rank & 100 & 80 & 30 & 100 \\ 
        $\tau_{cosine}$ & 0.1 & 0.2 & 0.1 & 0.1 \\
        $\tau_{s}$ & 0.6 & 0.6 & 0.4 & 0.0 \\
        $\tau_{e}$ & 0.9 & 0.8 & 0.6 & 0.1 \\
        $\lambda$ & 0.05 & 0.1 & 0.05 & 0.05 \\
        \bottomrule
    \end{tabular}
    \normalsize
    \caption{Hyperparameters for defense models under \random and \attack attacks.}
    \label{tab:hyper:defense:proposed}
\end{table}

\section{Proofs of the Lemmas}
Here, we provide the proofs of the lemmas~\ref{lemma:attack} and~\ref{lemma:defense}.
To focus on the time complexities of the proposed methods, we denote the time complexity of TGN for the inference of a single node's embedding as $C_{tgn}$ and the time complexity of the edge scorer as $C_{clf}$ for simplicity.
Let $d$ be the dimension of node embeddings, $S$ be the number of sampled neighbors, and $L$ be the number of layers.
The time complexity of TGN is dominated by the attention layer, hence $C_{tgn}=O\big(d\cdot(d+S)\cdot L\big)$~\citep{wang2021time}.
When using a 2-layer MLP as the edge scorer, its time complexity is $C_{clf}=O(d^{2})$.

\setcounter{lemma}{0}
\begin{lemma}[Time complexity of \attack]
    The time complexity of \attack is $O\big((C_{tgn}+|W|\cdot C_{clf}+p\cdot|W|^{2})\cdot|\calE|\big)$.
\end{lemma}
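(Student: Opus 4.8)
The plan is to decompose the cost of \attack into its per-batch components and then aggregate over all batches, since the algorithm processes the edge sequence in batches of size $|B|=\lfloor|W|/2\rfloor$. For each batch, I would account separately for (i) running the surrogate TGN to obtain the node embeddings, (ii) computing the edge scores over all candidate node pairs within the relevant node set, and (iii) solving the assignment problem via the Hungarian algorithm and selecting the $K$ adversarial edges. The key observation driving the bookkeeping is that the candidate node set $\calV_{b-1}$ has size $O(|W|)$ (it is composed of the endpoints of the previous batch's edges, and $|B|=\Theta(|W|)$), so all the per-batch quantities can be expressed as polynomials in $|W|$, $C_{tgn}$, and $C_{clf}$.

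First I would bound the embedding inference: across the whole run, the surrogate TGN must infer embeddings for the nodes involved, and since each of the $|\calE|$ original edges is processed once, the total inference cost is $O(C_{tgn}\cdot|\calE|)$. Next, for the edge-scoring step, within each batch we evaluate the edge scorer on all pairs in $\calE(\calV_{b-1})$, which is $O(|W|^{2})$ pairs, each at cost $C_{clf}$; however, the embeddings feeding the scorer are the $O(|W|)$ node embeddings, so I would argue the dominant scoring cost per batch is $O(|W|\cdot C_{clf})$ when the scorer is applied per node and combined pairwise, or fold the pairwise evaluation into the $p\cdot|W|^{2}$ term. Multiplying by the number of batches, which is $O(|\calE|/|W|)$, the embedding-and-scoring contributions scale to $O\big((C_{tgn}+|W|\cdot C_{clf})\cdot|\calE|\big)$ after the batch count cancels one factor of $|W|$ against the per-batch work.

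The main obstacle I anticipate is correctly attributing the $p\cdot|W|^{2}$ term, which comes from the Hungarian selection. The Hungarian algorithm on an $n\times n$ cost matrix runs in $O(n^{3})$ in general, but here it is applied $N$ times per batch with $N=\lceil K/E_{\max}(\calV_{b-1})\rceil$ and $K=\lfloor p\cdot|B|\rfloor = \Theta(p\cdot|W|)$. I would need to show that the repeated-assignment and candidate-pool construction, together with extracting the lowest-$K$ edges, amortizes to $O(p\cdot|W|^{2})$ per batch rather than something cubic; this likely relies on the observation that $N$ scales with $p$ and that each assignment round touches $O(|W|)$ nodes, so the product $N\cdot(\text{per-round cost})$ collapses to the stated quadratic bound. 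Once the per-batch cost is established as $O\big(C_{tgn}\cdot|W| + |W|^{2}\cdot C_{clf} + p\cdot|W|^{3}\big)$ or a suitably reduced variant, I would multiply by the $O(|\calE|/|W|)$ batches and simplify to obtain the claimed $O\big((C_{tgn}+|W|\cdot C_{clf}+p\cdot|W|^{2})\cdot|\calE|\big)$. The delicate part is ensuring the batch-count division and the per-batch polynomial degrees line up exactly with the three terms in the statement, so I would verify each term's exponent in $|W|$ independently before combining.
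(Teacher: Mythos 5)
Your overall route is exactly the paper's: bound the per-batch cost as $O\big(|B|\cdot C_{tgn}+|B|^{2}\cdot C_{clf}+p\cdot|B|^{3}\big)$, multiply by the $|\calE|/|B|$ batches with $|B|=\lfloor|W|/2\rfloor$, and simplify. Your closing paragraph states precisely this per-batch polynomial and the batch-count division, and that assembly is correct, so in its final form the proposal matches the paper's proof.

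Two mid-proposal hedges, however, are wrong and should be dropped before writing this up. First, on the scoring term: the edge scorer is evaluated once per candidate pair in $\calE(\calV_{b-1})$, so the per-batch cost is $O(|B|^{2}\cdot C_{clf})$, full stop. Your suggested alternative that the scorer is ``applied per node and combined pairwise'' at per-batch cost $O(|W|\cdot C_{clf})$ would, after multiplying by $|\calE|/|W|$ batches, yield only $O(C_{clf}\cdot|\calE|)$ and thus fail to produce the $|W|\cdot C_{clf}$ term in the lemma; likewise the pairwise scoring cannot be ``folded into'' the $p\cdot|W|^{2}$ term, since that term carries no $C_{clf}$ and is scaled by $p$ while scoring is not. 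The $|W|\cdot C_{clf}$ factor is simply the per-edge amortization: $|B|^{2}\cdot C_{clf}\cdot(|\calE|/|B|)=O(|W|\cdot C_{clf}\cdot|\calE|)$. Second, on the Hungarian term: no amortization argument is needed, and the per-batch cost genuinely \emph{is} cubic, not quadratic. Each round costs $O(|B|^{3})$, and the number of rounds is $N=\lceil K/E_{\max}(\calV_{b-1})\rceil$ with $K=\lfloor p\cdot|B|\rfloor$ and $E_{\max}(\calV_{b-1})=\Theta(|B|)$, so $N=O(p)$ and the per-batch cost is $O(p\cdot|B|^{3})$ --- this is the paper's one-line calculation $O(|B|^{3}\cdot\frac{p\cdot|B|}{|B|})$. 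The quadratic $p\cdot|W|^{2}$ appears only per edge, after the batch count cancels one factor of $|B|$; your statement that the selection ``amortizes to $O(p\cdot|W|^{2})$ per batch rather than something cubic'' confuses these two levels of accounting and contradicts your own final paragraph. The pool extraction and timestamp assignment are lower-order and need no separate treatment.
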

\begin{proof}
For each batch $b$, computing embeddings of all nodes in $\calV_{b-1}$ takes $O(|B|\cdot C_{tgn})$ time.
Next, computing edge scores for all node pairs within $\calV_{b-1}$ takes $O(|B|^{2}\cdot C_{clf})$ time.
The time complexity of the Hungarian algorithm is $O(|B|^{3})$~\citep{jonker1988shortest}, and we repeat it $N=\lceil\frac{K}{E_{max}(\calV_{b-1})}\rceil$ times, i.e., $O(|B|^{3}\cdot N)=O(|B|^{3}\cdot\frac{p\cdot|B|}{|B|})=O(p\cdot|B|^{3})$.
The number of batches is $\frac{|\calE|}{|B|}$ and $|B|=\frac{|W|}{2}$.
The overall time complexity of \attack is $O\big((C_{tgn}+|W|\cdot C_{clf}+p\cdot|W|^{2})\cdot|\calE|\big)$.
\end{proof}

\begin{lemma}[Time complexity of \defense]
    The time complexity of \defense is $O\big((C_{tgn}+C_{clf}+d)\cdot|\calE|\big)$
\end{lemma}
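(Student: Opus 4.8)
The plan is to bound the time complexity of \defense by accounting separately for its two components: edge filtering and temporal smoothness regularization, then summing over all edges processed during training. Since \defense processes the edges in batches just like the attack, I would reason per-edge (or per-batch) and multiply by the number of edges $|\calE|$. I would assume the same primitives introduced before Lemma~1: the cost $C_{tgn}$ to infer a single node embedding, the cost $C_{clf}$ to evaluate the edge scorer on a single pair, and $d$ the embedding dimension.

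First I would analyze the edge-filtering step of Section~\ref{sec:defense:filtering}. For each edge $e=(u,v,t)$, computing the edge score $\hat{y}_{uvt}$ requires first obtaining the embeddings $\vech_{u}(t)$ and $\vech_{v}(t)$, which costs $O(C_{tgn})$ per endpoint, and then one evaluation of the scorer, costing $O(C_{clf})$. Comparing the resulting score against the annealed threshold $\tau$ is $O(1)$. Hence filtering contributes $O\big((C_{tgn}+C_{clf})\cdot|\calE|\big)$ in total. Next I would analyze the temporal-smoothness term $\calL_{tmp}$: for each edge $(u,v,t)\in\calE_c$ and each endpoint $i\in\{u,v\}$, we compute the weight $w(t-t_i^-)=\exp(-\theta\Delta t)$ in $O(1)$ and a cosine similarity $\textnormal{sim}(\vech_i(t),\vech_i(t_i^-))$ between two $d$-dimensional vectors, costing $O(d)$. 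The previous embedding $\vech_i(t_i^-)$ is already available from the node memory, so no extra inference is needed beyond what filtering already computed. This yields $O(d\cdot|\calE|)$ for the regularization.

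Then I would simply add the two contributions. Since $|\calE_c|\leq|\calE\cup\tilde\calE|=O(|\calE|)$ (the number of injected adversarial edges is at most $\Delta=\lfloor p\cdot|\calE|\rfloor$, a constant-factor blowup absorbed into the $O$-notation), summing gives $O\big((C_{tgn}+C_{clf})\cdot|\calE|\big)+O\big(d\cdot|\calE|\big)=O\big((C_{tgn}+C_{clf}+d)\cdot|\calE|\big)$, which is the claimed bound. I would present this as a short derivation mirroring the structure of the proof of Lemma~1.

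The main subtlety, rather than a genuine obstacle, is justifying that each edge's embeddings are computed only once and reused across both the filtering and smoothness terms, so that $C_{tgn}$ enters linearly rather than being multiplied by extra factors. I would also note that since \defense does not solve any assignment problem, the cubic Hungarian term present in Lemma~1 is absent, which explains why the bound here is strictly cheaper per edge. Care is needed to confirm that the cosine-annealing threshold update and the weight function are all $O(1)$ per edge and contribute nothing asymptotically, so that the dominant costs are exactly the three terms $C_{tgn}$, $C_{clf}$, and $d$.
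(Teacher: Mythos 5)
Your proposal is correct and follows essentially the same route as the paper's proof: a per-edge decomposition into embedding inference ($O(C_{tgn})$ per endpoint), one edge-scorer evaluation ($O(C_{clf})$), and an $O(d)$ cosine-similarity term for the temporal-smoothness regularizer, combined with the observation that $O(|\calE_{c}|)=O(|\calE|)$. The only cosmetic difference is that the paper explicitly charges a second $O(C_{tgn}+C_{clf})$ pass over the remaining edges after filtering rather than arguing for embedding reuse as you do, which affects nothing asymptotically.
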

\begin{proof}
For each edge $e=(u,v,t)$, computing embeddings of two nodes $u$ and $v$ takes $O(C_{tgn})$ time for each.
Next, computing the edge score of the edge $e$ takes $O(C_{clf})$ time.
After edge filtering, for each remaining edge $e_{c}=(u_{c},v_{c},t_{c})\in\calE_{c}$, we compute embeddings of two nodes $u_{c}$ and $v_{c}$, and the edge score between them. 
It takes $O(C_{tgn}+C_{clf})$ time.
For enforcing temporal robustness, computing the cosine similarity between the current and previous embedding of $u_{c}$ and $v_{c}$ takes $O(d)$ time, where $d$ is the dimension of node embeddings.
The number of remaining edges can be at most the total number of edges, i.e., $O(|\calE_{c}|)=O(|\calE|)$, and thus the overall time complexity of \defense is $O\big((C_{tgn}+C_{clf}+d)\cdot|\calE|\big)$.
\end{proof}

\section{Additional Experimental Results}
\subsection{Attack Performance}

Table~\ref{tab:attack_hit} provides a summary of the link prediction performance, measured by Hit@10 (MRR results can be found in the main paper), for all victim models across various attack methods at a perturbation rate of 0.3.
Across all datasets and victim models, our proposed attack consistently outperforms the baselines (lower values indicate better attack performance).
Overall, \attack exhibits the most notable performance degradation, averaging -13.3\%. 

\subsubsection{When the Victim Model's Internals are Unknown.}
In Figure~\ref{fig:dimension}, we illustrate the attacker's ability to conduct a successful attack on the victim model, even without knowing the exact embedding size used by the victim model. 
We assessed the attack's efficacy by using the embedding size of the surrogate model that varied from 0.5$\times$ to 4$\times$ that of the victim model. 
Notably, the attack consistently performed well regardless of the surrogate model's embedding size, indicating the applicability of our approach in situations where the attacker lacks knowledge of the victim model.

\begin{figure}
    \centering
    \begin{subfigure}{0.49\columnwidth}
        \includegraphics[width=\columnwidth]{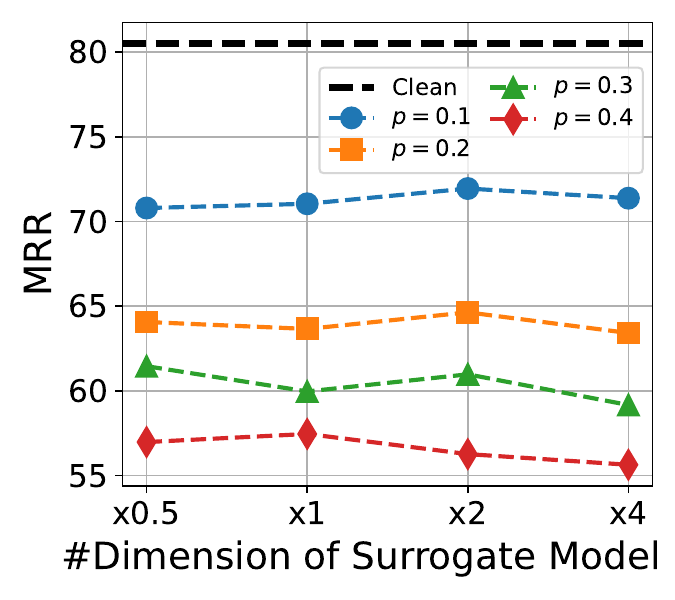}
    \end{subfigure}
    \begin{subfigure}{0.49\columnwidth}
        \includegraphics[width=\columnwidth]{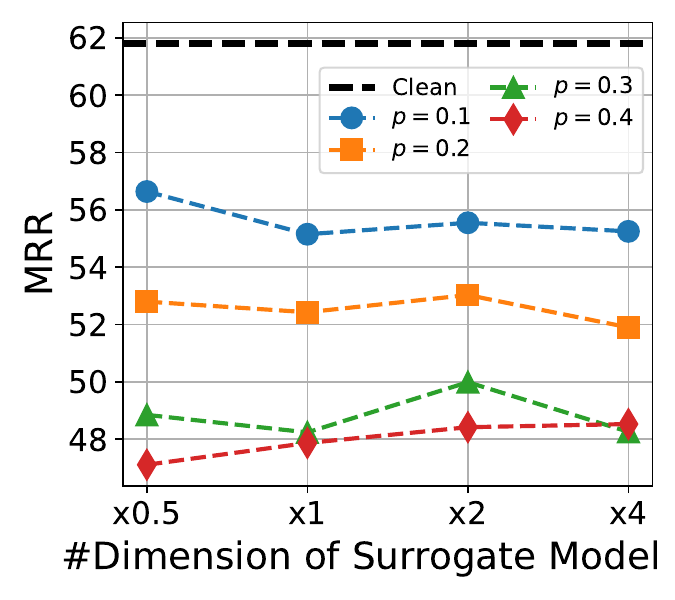}
    \end{subfigure}
    \caption{Link prediction performance of TGN on Wikipedia (left) and MOOC (right) as the embedding size of the surrogate model changes 0.5$\times$ to 4$\times$ that of the victim model. \attack consistently performed well regardless of the surrogate model's embedding size.}
    \label{fig:dimension}
\end{figure}

\begin{table*}[!t]
    \setlength{\tabcolsep}{5pt}
    \setlength\aboverulesep{0.5pt}
    \centering
    \scalebox{0.9}{
    \begin{tabular}{llccccccc}
        \toprule
        Dataset & Victim & Clean & \random & \pa & \jaccard\!* & \degree & \pagerank & \textbf{\attack} \\
        \midrule
        \multirow{4}{*}{Wikipedia}  & \TGN      & 91.3 $\pm$ 0.2 & 81.7 $\pm$ 0.7 & \ul{80.6 $\pm$ 0.9} & - & 82.1 $\pm$ 0.7 & 82.1 $\pm$ 1.1 & \tb{76.1 $\pm$ 1.6} \\
                                    & \JODIE    & 80.6 $\pm$ 0.9 & 59.2 $\pm$ 4.1 & 60.2 $\pm$ 3.3 & - & \ul{58.5 $\pm$ 2.4} & 60.7 $\pm$ 3.4 & \tb{57.9 $\pm$ 4.4} \\
                                    & \TGAT     & 82.4 $\pm$ 0.3 & 66.2 $\pm$ 0.7 & \tb{64.0 $\pm$ 1.4} & - & 65.9 $\pm$ 0.6 & 65.9 $\pm$ 0.6 & \ul{64.9 $\pm$ 0.3} \\
                                    & \DySAT    & 80.4 $\pm$ 0.1 & 78.7 $\pm$ 0.1 & 77.9 $\pm$ 0.1 & - & \ul{77.8 $\pm$ 0.2} & \tb{77.8 $\pm$ 0.1} & 78.2 $\pm$ 0.1 \\
        \midrule
        \multirow{4}{*}{MOOC}   & \TGN          & 66.7 $\pm$ 2.1 & 60.9 $\pm$ 1.7 & 61.1 $\pm$ 1.7 & - & \ul{55.5 $\pm$ 1.5} & 58.3 $\pm$ 1.0 & \tb{53.3 $\pm$ 1.5} \\
                                & \JODIE        & 41.5 $\pm$ 2.4 & \ul{27.1 $\pm$ 0.5} & 32.8 $\pm$ 4.1 & - & 29.3 $\pm$ 1.9 & 30.1 $\pm$ 1.6 & \tb{25.9 $\pm$ 1.9} \\
                                & \TGAT         & 23.3 $\pm$ 0.2 & 20.3 $\pm$ 0.4 & 19.7 $\pm$ 0.1 & - & 21.8 $\pm$ 0.5 & \ul{19.3 $\pm$ 0.3} & \tb{18.6 $\pm$ 0.4} \\
                                & \DySAT        & 28.8 $\pm$ 0.1 & 23.9 $\pm$ 0.2 & \ul{23.4 $\pm$ 0.4} & - & 23.6 $\pm$ 0.3 & \tb{23.4 $\pm$ 0.2} & 23.7 $\pm$ 0.2 \\
        \midrule
        \multirow{4}{*}{UCI}    & \TGN      & 60.0 $\pm$ 1.2 & 58.4 $\pm$ 0.7 & 63.2 $\pm$ 1.9 & 59.4 $\pm$ 1.5 & \ul{58.2 $\pm$ 1.6} & 58.7 $\pm$ 1.6 & \tb{57.8 $\pm$ 1.0} \\
                                & \JODIE    & 68.3 $\pm$ 0.6 & 63.8 $\pm$ 1.6 & \ul{63.4 $\pm$ 0.8} & 63.7 $\pm$ 1.0 & \tb{62.6 $\pm$ 1.2} & 63.5 $\pm$ 0.8 & 63.6 $\pm$ 1.2 \\
                                & \TGAT     & 27.5 $\pm$ 0.7 & 26.6 $\pm$ 0.8 & \tb{24.4 $\pm$ 0.4} & 26.1 $\pm$ 0.4 & 25.0 $\pm$ 0.6 & 27.9 $\pm$ 0.9 & \ul{24.5 $\pm$ 0.7} \\
                                & \DySAT    & 98.9 $\pm$ 1.6 & 98.3 $\pm$ 1.1 & 98.5 $\pm$ 0.5 & \tb{97.8 $\pm$ 1.1} & \ul{98.2 $\pm$ 0.4} & 98.4 $\pm$ 0.3 & 98.6 $\pm$ 0.6 \\
        \midrule
        \multirow{4}{*}{Bitcoin}    & \TGN      & \ul{66.9 $\pm$ 0.8} & 67.8 $\pm$ 1.2 & 68.0 $\pm$ 0.7 & 67.6 $\pm$ 1.0 & 67.0 $\pm$ 0.6 & 68.1 $\pm$ 0.9 & \tb{66.6 $\pm$ 1.8} \\
                                    & \JODIE    & \ul{64.8 $\pm$ 0.7} & 65.9 $\pm$ 0.6 & 65.3 $\pm$ 0.5 & 66.1 $\pm$ 0.6 & 65.5 $\pm$ 0.6 & \tb{64.8 $\pm$ 0.6} & 65.1 $\pm$ 0.3 \\
                                    & \TGAT     & 41.3 $\pm$ 0.4 & 33.6 $\pm$ 0.7 & 31.5 $\pm$ 0.8 & 33.9 $\pm$ 0.8 & \ul{29.8 $\pm$ 1.3} & \tb{28.9 $\pm$ 1.6} & 30.6 $\pm$ 0.6 \\
                                    & \DySAT    & 99.2 $\pm$ 1.0 & \tb{98.7 $\pm$ 1.0} & 99.4 $\pm$ 0.4 & 99.4 $\pm$ 0.4 & 99.3 $\pm$ 0.6 & 99.3 $\pm$ 0.6 & \ul{99.0 $\pm$ 0.5} \\
        \midrule    
        \multicolumn{2}{c}{A.P.D.$\downarrow$} & - & -10.1\% & -10.0\% & -3.6\% & \ul{-11.2\%} & -10.6\% & \tb{-13.3\%} \\         
        \bottomrule             
        \multicolumn{9}{l}{ *\jaccard cannot be applied to bipartite graphs.}
    \end{tabular}}
    \caption{Hit@10 at a perturbation rate $p=0.3$ (MRR results can be found in the main paper). 
    For each victim model, the best and second-best performances are highlighted in boldface and underlined, respectively. 
    A.P.D. stands for average performance degradation.
    As we choose the model's best parameters using validation MRR, Hit@10 might be somewhat unstable.
    However, across all datasets and victim models, \attack consistently outperforms the baselines.}
    \label{tab:attack_hit}
\end{table*}

\begin{table*}[!t]
    \setlength{\tabcolsep}{3.5pt}
    \setlength\aboverulesep{0.5pt}
    \centering
    \scalebox{0.9}{
    \begin{tabular}{lccccccccccc}
        \toprule
        \multirow{2}{*}[-1mm]{Model} & \multicolumn{3}{c}{Wikipedia (Clean: 91.3 $\pm$ 0.2)} & \multicolumn{3}{c}{MOOC (Clean: 66.7 $\pm$ 2.1)} & \multicolumn{3}{c}{UCI (Clean: 60.0 $\pm$ 1.2)} & \multirow{2}{*}[-1mm]{A.P.G.$\uparrow$} \\
        \cmidrule(lr){2-4} \cmidrule(lr){5-7} \cmidrule(lr){8-10} 
        & $p=0.1$ & $p=0.2$ & $p=0.3$ & $p=0.1$ & $p=0.2$ & $p=0.3$ & $p=0.1$ & $p=0.2$ & $p=0.3$ \\
        \midrule

        \TGN                    & 88.1 $\pm$ 0.5 & 84.4 $\pm$ 0.7 & 81.7 $\pm$ 0.7 & 63.4 $\pm$ 1.2 & 63.0 $\pm$ 0.8 & 60.9 $\pm$ 1.7 & 62.8 $\pm$ 1.6 & 60.7 $\pm$ 3.0 & 58.4 $\pm$ 0.7 & - \\
        \tgnsvd                 & 85.0 $\pm$ 0.7 & 81.4 $\pm$ 1.2 & 78.9 $\pm$ 0.9 & \tb{63.8 $\pm$ 0.5} & 63.0 $\pm$ 1.2 & 60.3 $\pm$ 1.0 & 65.9 $\pm$ 1.3 & 66.5 $\pm$ 0.5 & \ul{68.1 $\pm$ 1.4} & +2.2\% \\
        \tgncosine              & 87.6 $\pm$ 0.2 & 85.7 $\pm$ 0.1 & 83.5 $\pm$ 0.6 & 59.0 $\pm$ 2.6 & 57.7 $\pm$ 3.8 & 58.9 $\pm$ 3.0 & 65.7 $\pm$ 1.1 & 66.0 $\pm$ 1.2 & 67.8 $\pm$ 0.5 & +1.6\% \\
        \midrule
        \textbf{\defensefilter} & \ul{88.1 $\pm$ 0.3} & \tb{88.3 $\pm$ 0.4} & \ul{87.9 $\pm$ 0.1} & \ul{63.8 $\pm$ 1.0} & \tb{63.2 $\pm$ 1.3} & \ul{62.5 $\pm$ 0.8} & \tb{66.1 $\pm$ 0.9} & \tb{67.4 $\pm$ 1.2} & 67.6 $\pm$ 0.9 & \ul{+5.3\%} \\
        \textbf{\defense}       & \tb{88.1 $\pm$ 0.3} & \ul{88.0 $\pm$ 0.3} & \tb{88.0 $\pm$ 0.2} & 63.6 $\pm$ 1.4 & \ul{63.1 $\pm$ 1.1} & \tb{62.9 $\pm$ 0.7} & \ul{65.9 $\pm$ 0.5} & \ul{67.4 $\pm$ 1.3} & \tb{68.4 $\pm$ 1.3} & \tb{+5.4\%} \\
        
        \bottomrule             
    \end{tabular}}
    \caption{Hit@10 under \random attack (MRR results can be found in the main paper). A.P.G. stands for average performance gain over the na\"ive TGN.
    \defense and \defensefilter consistently enhance the link prediction performance across various attack methods, datasets, and perturbation rates.
    }
    \label{tab:defense:random_hit}
\end{table*}

\begin{table*}[!t]
    \setlength{\tabcolsep}{3.5pt}
    \setlength\aboverulesep{0.5pt}
    \centering
    \scalebox{0.9}{
    \begin{tabular}{lccccccccccc}
        \toprule
        \multirow{2}{*}[-1mm]{Model} & \multicolumn{3}{c}{Wikipedia (Clean: 91.3 $\pm$ 0.2)} & \multicolumn{3}{c}{MOOC (Clean: 66.7 $\pm$ 2.1)} & \multicolumn{3}{c}{UCI (Clean: 60.0 $\pm$ 1.2)} & \multirow{2}{*}[-1mm]{A.P.G.$\uparrow$} \\
        \cmidrule(lr){2-4} \cmidrule(lr){5-7} \cmidrule(lr){8-10} 
        & $p=0.1$ & $p=0.2$ & $p=0.3$ & $p=0.1$ & $p=0.2$ & $p=0.3$ & $p=0.1$ & $p=0.2$ & $p=0.3$ \\
        \midrule

        \TGN                    & 86.3 $\pm$ 1.4 & 80.0 $\pm$ 2.8 & 76.1 $\pm$ 1.6 & 60.7 $\pm$ 0.7 & 57.7 $\pm$ 0.4 & 53.3 $\pm$ 1.5 & 60.8 $\pm$ 3.4 & 57.4 $\pm$ 0.6 & 57.8 $\pm$ 1.0 & - \\
        \tgnsvd                 & 85.9 $\pm$ 0.6 & 79.8 $\pm$ 1.1 & 75.6 $\pm$ 2.2 & 61.1 $\pm$ 1.1 & 55.8 $\pm$ 2.3 & 55.0 $\pm$ 2.1 & \tb{68.6 $\pm$ 1.1} & \tb{68.4 $\pm$ 0.3} & \tb{71.0 $\pm$ 0.3} & +6.0\% \\
        \tgncosine              & 85.8 $\pm$ 0.7 & 78.8 $\pm$ 1.9 & 76.1 $\pm$ 2.0 & 58.8 $\pm$ 2.8 & 52.9 $\pm$ 1.6 & 50.2 $\pm$ 2.0 & 65.3 $\pm$ 0.5 & 66.6 $\pm$ 1.4 & 64.8 $\pm$ 2.6 & +1.8\% \\
        \midrule
        \textbf{\defensefilter} & \ul{88.0 $\pm$ 0.4} & \ul{87.7 $\pm$ 0.4} & \ul{86.7 $\pm$ 0.4} & \ul{63.1 $\pm$ 0.5} & \ul{60.2 $\pm$ 1.0} & \tb{57.5 $\pm$ 2.9} & 65.8 $\pm$ 0.9 & \ul{68.2 $\pm$ 1.3} & 67.4 $\pm$ 1.5 & \ul{+9.5\%} \\
        \textbf{\defense}       & \tb{88.0 $\pm$ 0.3} & \tb{87.8 $\pm$ 0.1} & \tb{87.2 $\pm$ 0.3} & \tb{63.3 $\pm$ 0.7} & \tb{60.8 $\pm$ 1.1} & \ul{56.6 $\pm$ 2.1} & \ul{66.5 $\pm$ 0.6} & 67.0 $\pm$ 1.3 & \ul{69.5 $\pm$ 0.7} & \tb{+9.8\%} \\
        
        \bottomrule             
    \end{tabular}}
    \caption{Hit@10 under \attack attack (MRR results can be found in the main paper). A.P.G. stands for average performance gain over the na\"ive TGN.
    \defense and \defensefilter consistently enhance the link prediction performance across various attack methods, datasets, and perturbation rates.
    }
    \label{tab:defense:proposed_hit}
\end{table*}

\subsection{Defense Performance}
Table~\ref{tab:defense:random_hit} and~\ref{tab:defense:proposed_hit} present the performance of the defense models in terms of Hit@10 (MRR results can be found in the main paper) against the \random and \attack attacks, respectively. 
Importantly, \defense and \defensefilter consistently enhance the link prediction performance across various attack methods, datasets, and perturbation rates. 
In Table~\ref{tab:defense:bitcoin:mrr} and~\ref{tab:defense:bitcoin:hit}, we provide additional experimental results for defense models on the Bitcoin-OTC dataset.
In the Bitcoin-OTC dataset, the performance gap between the clean graph and the attacked graphs is not significant; therefore, the improvement in the defense model's performance is not noticeable. 
Nonetheless, there was still an average performance increase of 1.9\%.

\begin{table*}[!t]
    \setlength{\tabcolsep}{3.5pt}
    \setlength\aboverulesep{0.5pt}
    \centering
    \scalebox{0.9}{
    \begin{tabular}{lcccccccc}
        \toprule
        \multirow{2}{*}[-1mm]{Model} & \multicolumn{3}{c}{Under \random Attack} & \multicolumn{3}{c}{Under \attack Attack} & \multirow{2}{*}[-1mm]{A.P.G.$\uparrow$} \\
        \cmidrule(lr){2-4} \cmidrule(lr){5-7} 
        & $p=0.1$ & $p=0.2$ & $p=0.3$ & $p=0.1$ & $p=0.2$ & $p=0.3$ \\
        \midrule

        \TGN                    & 47.6 $\pm$ 0.4 & 47.2 $\pm$ 1.0 & 48.0 $\pm$ 1.3 & 47.9 $\pm$ 0.4 & 46.7 $\pm$ 1.1 & 45.2 $\pm$ 1.4 & - \\
        \tgnsvd                 & 46.8 $\pm$ 0.6 & 47.0 $\pm$ 0.6 & \tb{48.1 $\pm$ 0.4} & 48.0 $\pm$ 0.3 & \ul{48.2 $\pm$ 0.7} & \tb{48.2 $\pm$ 0.7} & +1.4\% \\
        \tgncosine              & 41.5 $\pm$ 0.5 & 42.9 $\pm$ 0.6 & 44.2 $\pm$ 1.0 & 42.4 $\pm$ 1.1 & 43.1 $\pm$ 0.9 & 43.2 $\pm$ 0.5 & -8.9\% \\
        \midrule
        \textbf{\defensefilter} & \ul{48.0 $\pm$ 0.7} & \ul{48.0 $\pm$ 0.9} & \ul{48.1 $\pm$ 0.8} & \tb{48.5 $\pm$ 0.6} & 47.5 $\pm$ 1.4 & 46.8 $\pm$ 1.5 & \ul{+1.5\%} \\
        \textbf{\defense}       & \tb{48.0 $\pm$ 0.5} & \tb{48.0 $\pm$ 0.6} & 48.0 $\pm$ 0.7 & \ul{48.4 $\pm$ 0.7} & \tb{48.4 $\pm$ 0.7} & \ul{47.2 $\pm$ 1.4} & \tb{+1.9\%} \\
        
        \bottomrule             
    \end{tabular}}
    \caption{MRR under \random and \attack attacks on the Bitcoin-OTC dataset. A.P.G. stands for average performance gain over the na\"ive TGN. MRR of the clean dataset is $48.3 \pm 0.7$.
    In this dataset, the performance gap between the clean graph and the attacked graphs is not significant; therefore, the improvement in the defense model's performance is not noticeable. 
    Nonetheless, there was still an average performance increase of 1.9\%.}
    \label{tab:defense:bitcoin:mrr}
\end{table*}

\begin{table*}[!t]
    \setlength{\tabcolsep}{3.5pt}
    \setlength\aboverulesep{0.5pt}
    \centering
    \scalebox{0.9}{
    \begin{tabular}{lccccccccccc}
        \toprule
        \multirow{2}{*}[-1mm]{Model} & \multicolumn{3}{c}{Under \random Attack} & \multicolumn{3}{c}{Under \attack Attack} & \multirow{2}{*}[-1mm]{A.P.G.$\uparrow$} \\
        \cmidrule(lr){2-4} \cmidrule(lr){5-7} 
        & $p=0.1$ & $p=0.2$ & $p=0.3$ & $p=0.1$ & $p=0.2$ & $p=0.3$ \\
        \midrule

        \TGN                    & \ul{67.5 $\pm$ 0.4} & 66.7 $\pm$ 1.0 & \tb{67.8 $\pm$ 1.2} & 67.6 $\pm$ 0.8 & 66.1 $\pm$ 1.0 & 66.6 $\pm$ 1.8 & - \\
        \tgnsvd                 & 67.0 $\pm$ 0.3 & 66.8 $\pm$ 0.7 & \ul{67.4 $\pm$ 0.4} & 67.8 $\pm$ 0.5 & \tb{67.3 $\pm$ 0.7} & \tb{69.7 $\pm$ 0.3} & \tb{+0.9\%} \\
        \tgncosine              & 60.0 $\pm$ 0.9 & 62.1 $\pm$ 0.9 & 64.2 $\pm$ 1.1 & 60.1 $\pm$ 0.7 & 61.3 $\pm$ 1.4 & 62.5 $\pm$ 1.3 & -8.0\% \\
        \midrule
        \textbf{\defensefilter} & \tb{67.7 $\pm$ 0.6} & \tb{67.4 $\pm$ 1.0} & 66.6 $\pm$ 0.7 & \ul{68.0 $\pm$ 1.2} & 66.4 $\pm$ 1.7 & \ul{68.0 $\pm$ 1.5} & +0.5\% \\
        \textbf{\defense}       & 67.5 $\pm$ 0.8 & \ul{67.0 $\pm$ 1.1} & 66.5 $\pm$ 0.7 & \tb{68.4 $\pm$ 0.5} & \ul{67.3 $\pm$ 0.9} & 67.8 $\pm$ 1.2 & \ul{+0.6\%} \\
        \bottomrule             
    \end{tabular}}
    \caption{Hit@10 under \random and \attack attacks on the Bitcoin-OTC dataset. A.P.G. stands for average performance gain over the na\"ive TGN. Hit@10 of the clean dataset is $66.9 \pm 0.8$.
    Since we optimize the defense model with validation MRR, Hit@10 measurements might be unstable.
    In this case, the baseline \tgnsvd outperforms \defense.
    However, the difference is not very significant, and our proposed defense method ranks second.
    }
    \label{tab:defense:bitcoin:hit}
\end{table*}


\end{document}